\theoremstyle{plain}
\newtheorem{theorem}{Theorem}[section]
\theoremstyle{definition}
\theoremstyle{remark}
\newtheorem{remark}[theorem]{Remark}
\icmltitlerunning{Submission and Formatting Instructions for ICML 2023}
\begin{document}

\twocolumn[
\icmltitle{Accuracy on the Curve: On the Nonlinear Correlation of ML Performance Between Data Subpopulations}

\icmlsetsymbol{equal}{*}

\begin{icmlauthorlist}
\icmlauthor{Weixin Liang}{equal,cs,ee}
\icmlauthor{Yining Mao}{equal,ee}
\icmlauthor{Yongchan Kwon}{equal,columbia}
\icmlauthor{Xinyu Yang}{zju,cornell}
\icmlauthor{James Zou}{cs,ee,DBDS}
\end{icmlauthorlist}

\icmlaffiliation{cs}{Department of Computer Science, Stanford University, Stanford, CA, USA}
\icmlaffiliation{ee}{Department of Electrical Engineering, Stanford University, Stanford, CA, USA}
\icmlaffiliation{columbia}{Department of Statistics, Columbia University, New York, NY, USA}
\icmlaffiliation{zju}{Department of Computer Science and Engineering, Zhejiang University, Hangzhou, P.R.China}
\icmlaffiliation{cornell}{Department of Information Science, Cornell University, Ithaca, NY, USA}
\icmlaffiliation{DBDS}{Department of Biomedical Data Science, Stanford University, Stanford, CA, USA}

\icmlcorrespondingauthor{Weixin Liang}{wxliang@stanford.edu}
\icmlcorrespondingauthor{James Zou}{jamesz@stanford.edu}

\icmlkeywords{Machine Learning, ICML, Out-of-distribution performance, Distribution Shifts, Subpopulation Shifts, Moon Shape, Nonlinear Correlation}

\vskip 0.3in ]

\printAffiliationsAndNotice{\icmlEqualContribution} %
\begin{abstract}

Understanding the performance of machine learning (ML) models across diverse data distributions is critically important for reliable applications. Despite recent empirical studies positing a near-perfect linear correlation between in-distribution (ID) and out-of-distribution (OOD) accuracies, we empirically demonstrate that this correlation is more nuanced under subpopulation shifts. Through rigorous experimentation and analysis across a variety of datasets, models, and training epochs, we demonstrate that OOD performance often has a nonlinear correlation with ID performance in subpopulation shifts. Our findings, which contrast previous studies that have posited a linear correlation in model performance during distribution shifts, reveal a "moon shape" correlation (parabolic uptrend curve) between the test performance on the majority subpopulation and the minority subpopulation. This non-trivial nonlinear correlation holds across model architectures, hyperparameters, training durations, and the imbalance between subpopulations. Furthermore, we found that the nonlinearity of this "moon shape" is causally influenced by the degree of spurious correlations in the training data. Our controlled experiments show that stronger spurious correlation in the training data creates more nonlinear performance correlation. We provide complementary experimental and theoretical analyses for this phenomenon, and discuss its implications for ML reliability and fairness. Our work highlights the importance of understanding the nonlinear effects of model improvement on performance in different subpopulations, and has the potential to inform the development of more equitable and responsible machine learning models.

\end{abstract}

\begin{figure}[ht]
    \centering
    \includegraphics[width=0.48\textwidth]
    {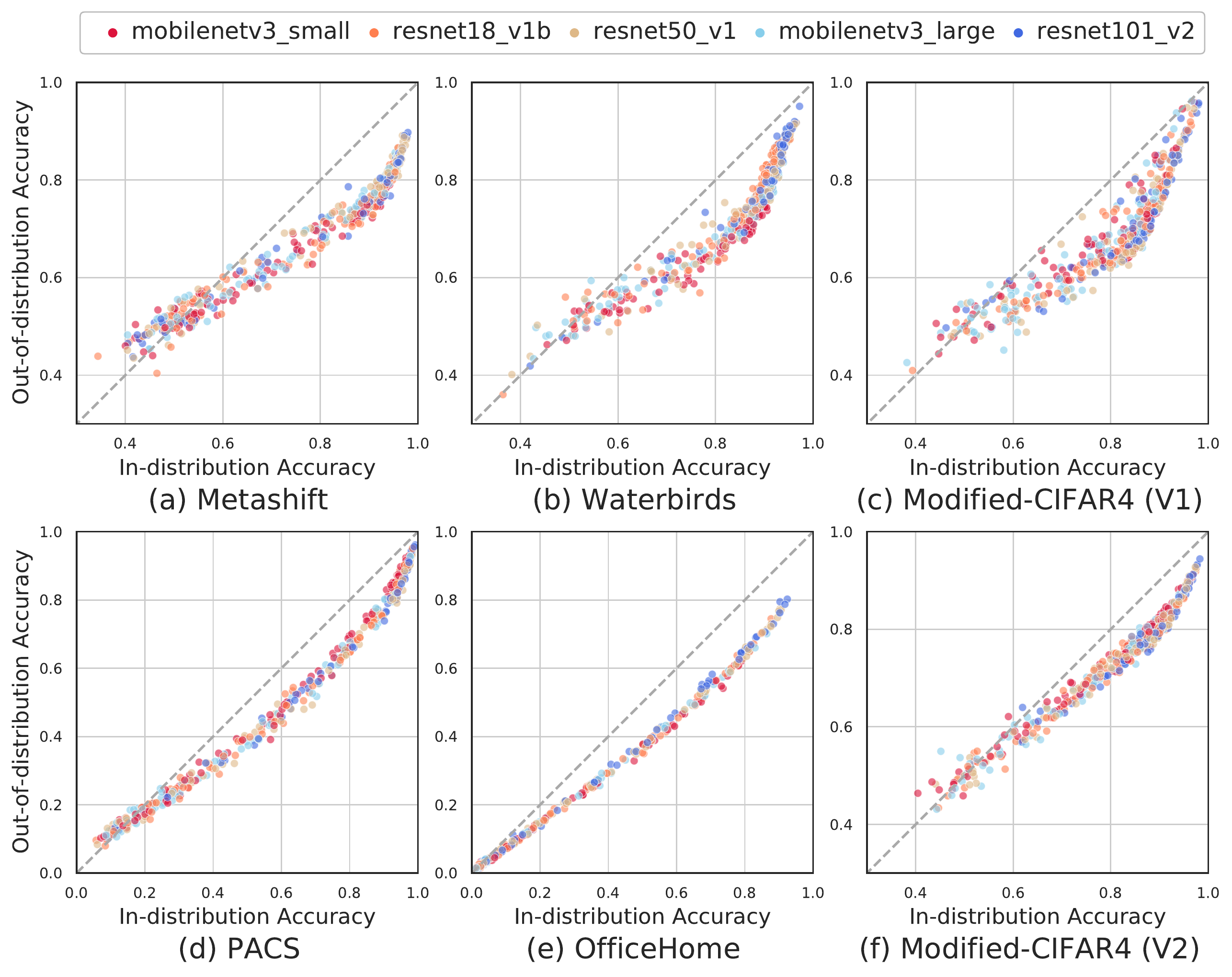}
    \caption{
    \small
\textbf{A striking nonlinear correlation between out-of-distribution and in-distribution performance under subpopulation shifts.}
Each dot represents a trained model and each panel represents a dataset.
Our comprehensive experimentation, utilizing a variety of model architectures and hyperparameters, reveals a precise correlation between OOD and ID performance.
The top panels (a-c) depict datasets constructed with spurious correlations, where the correlation is notably nonlinear.
The bottom panels (d-f) depict datasets with rare subpopulations (absent of spurious correlations), where the correlation is more subtle, but as our analysis in Figure~\ref{fig:figure2}, illustrates, still nonlinear. 
Our findings have significant implications for understanding and improving the reliability and fairness of machine learning models.
    }
    \label{fig:figure1}
\end{figure}

\section{Introduction}

Machine learning (ML) models often exhibit vastly different performance and behaviors when applied to different data distributions. This can be a significant challenge in ML, as even the best efforts to create data that closely represents the real-world may not fully capture the dynamic, high-dimensional, and combinatoric complexity of many tasks. As a result, AI models deployed in the wild are likely to encounter out-of-distribution (OOD) data, raising fundamental questions such as, "Can you trust your model on out-of-distribution data? How does an AI model's in-distribution (ID) performance relate to its out-of-distribution (OOD) performance?"

Exciting progress has been made in addressing these questions on OOD performance, but classical domain adaptation theory only provides a partial answer. Built on the uniform-convergence framework, these classical works resort to bounding OOD performance by quantifying the distance between the ID and OOD (e.g., via the $\mathcal{H} \Delta \mathcal{H}$ divergence~\citep{ben2010theory}), thereby producing an upper bound on OOD performance that becomes increasingly loose for larger distribution shifts~\citep{redko2020survey}.

Pioneering recent research, such as that by \citet{robustness-natural-shifts,accuracy-on-the-line,kaplun2022deconstructing}, has uncovered an exciting phenomenon that is not captured by classical theory: an almost perfect linear correlation in probit scale between ID and OOD performance, which has been repeatedly found across a wide spectrum of OOD benchmarks. This phenomenon, known as "accuracy-on-the-line," holds not only for dataset reconstruction shifts (e.g., ImageNet-V2~\citep{ImageNet-V2}, CIFAR-10.1~\citep{CIFAR-10.1}), but also for more complex distribution shift benchmarks such as WILDS~\citep{WILDS} and BREEDS~\citep{BREEDS}.

The "accuracy-on-the-line" phenomenon has garnered significant interest and excitement, as it suggests that, given access to the slope and bias of the linear correlation, predicting OOD accuracy becomes straightforward. For example, \citet{agreement} have shown a related "agreement-on-the-line" phenomenon, observing that the OOD agreement between the predictions of any two pairs of neural networks also exhibits a strong linear correlation with their ID agreement. Furthermore, the slope and bias of OOD vs ID agreement closely match that of OOD vs ID accuracy, thereby enabling the prediction of OOD accuracy with just unlabeled data. This has been a long-standing research problem, highlighting the significance of "accuracy-on-the-line".

In this work, we investigate a common and challenging type of distribution shift known as \emph{subpopulation shifts}. This phenomenon is frequently observed in real-world applications, such as medical AI models that perform differently when deployed on different sites with different demographics~\citep{wu2021medical,DDI}.

Our research reveals a \emph{nonlinear} correlation between in-distribution (ID) and out-of-distribution (OOD) performance under subpopulation shifts (as demonstrated in Figure~\ref{fig:figure1}).
To gain insight into this phenomenon, we decompose the model's performance into performance on each subpopulation (as shown in Figure~\ref{fig:figure2}).
We observe a consistent \emph{``moon shape'' correlation} (parabolic uptrend curve) between the test performance on the majority subpopulation and the minority subpopulation.
These nonlinear correlations are observed across a variety of datasets, models, and training epochs (as depicted in Figure~\ref{fig:figure-persist}), and are also present in multi-subpopulation data (as shown in Figure~\ref{fig:3D-moonshape}) and under different distribution shift algorithms (Figure~\ref{fig:OOD-algorithms}).

To understand the underlying causes of this nonlinear performance correlation, we conducted an extensive empirical analysis. Our results indicate that the degree of \emph{spurious correlations} in training data plays a significant role. Spurious correlations refer to connections between variables that appear to be causal but are not~\citep{pearl2000models}. We find that datasets with spurious correlations (as shown in Figure~\ref{fig:figure2} top) exhibit more nonlinear performance correlations than datasets without spurious correlations (as shown in Figure~\ref{fig:figure2} bottom). Our controlled study confirms that stronger spurious correlation leads to more nonlinear performance correlation (as depicted in Figure~\ref{fig:figure-spurious}).

This research highlights an important issue with state-of-the-art AI models, which often pick up spurious correlations and biases in training data~\citep{liang2022advances}. These correlations may initially improve performance, but can fail catastrophically when deployed in slightly different environments. Furthermore, our findings indicate that current agreement-based approaches for predicting OOD performance \emph{systematically overestimate} performance under the presence of spurious correlations (as shown in Figure~\ref{fig:figure-agreement}), suggesting a need for new methods to address this problem.

It is worth emphasizing that this work does not contradict, but rather complements and extends previous work. Our work confirms the existence of \emph{strong} and \emph{precise} correlations between ID and OOD performance, which have not been fully captured by classical domain adaptation theory. Additionally, we have identified the presence of spurious correlation as a contributing factor to this broader performance correlation phenomenon. Although the correlations exhibit nonlinearity, they remain geometrically simple, opening up opportunities for the development of new methods for predicting OOD performance.

Beyond distribution shifts, the significance of our findings can also be viewed in the context of machine learning (ML) reliability and fairness.
It is well-documented that a model can have disparate performances even within different subsets of its training and evaluation data~\citep{eyuboglu2022domino,liang2023gpt}.
Furthermore, these performance disparities can have a cascading effect, leading to decreased user retention and further amplifying the performance gap over time~\citep{Fairness-Without-Demographics,ML-Credit-Markets}.
For example, computer-vision AI models for diagnosing malignant skin lesions performed substantially worse on lesions appearing on dark skin compared to light skin, with the area under the receiver operating curves (AUROC) dropping by 10-15\% across skin tones~\citep{DDI}.
Our work shows that ML performances between data subpopulations, albeit disparate, can have much more precise correlations than previously expected from the literature.
Furthermore, we also identify certain situations in the presence of spurious correlations where performance improvement for the majority subpopulation leads to \emph{alarmingly consistent performance deterioration} for the minority subpopulation.
As the existence of ML performance disparities across subpopulations sternly undermines the trustworthiness, reliability, and fairness of ML models, our work makes a critical step towards the empirical understanding of \textit{how ML performances between data subpopulations are correlated}.
Concretely, this paper makes the following main \textbf{contributions}:
\begin{itemize}
    \itemsep0.2em
    
    \item 
    To the best of our knowledge, we present the first systematic study on the performance correlation between data subpopulations. We found a nonlinear, \emph{``moon shape''} correlation between the test performance on the \emph{majority} subpopulation and the \emph{minority} subpopulation (Figure~\ref{fig:figure2}). This indicates that ML performances between data subpopulations, albeit disparate, can have much more precise correlations than previously expected from the literature.

    \item 
    In contrast to recent works reporting a near-perfect linear correlation, we found a nonlinear correlation under subpopulation shifts between ID and OOD accuracies (Figure~\ref{fig:figure1}). We empirically show that this non-trivial \emph{nonlinear} correlation holds across model architectures, hyperparameters, training durations, and the imbalance between subpopulations (Figure~\ref{fig:figure-persist}). In addition, our experiments on multi-subpopulation datasets and different distribution shift algorithms beyond ERM further highlights the generality of this phenomenon. We also demonstrate how our findings complement and contrast previous empirical studies under probit-transformed axes (Figure~\ref{fig:figure-probit}). Our findings significantly broaden the scope of the broader correlation phenomenon between ID performance and OOD performance. 
    
    \item 
    Supported by extensive empirical and theoretical analysis (see Appendix E for detailed theoretical analysis), we identify the degree of \emph{spurious correlations} in training data as an important cause for this nonlinearity. We demonstrate that datasets with spurious correlations (Figure~\ref{fig:figure2} top) show \emph{more nonlinear} correlations than datasets without spurious correlations (Figure~\ref{fig:figure2} bottom). We conduct rigorous controlled studies confirming that stronger spurious correlations create more nonlinear performance correlations (Figure~\ref{fig:figure-spurious}).

    \item 
    We demonstrate that current agreement-based approach for predicting OOD performance would \emph{systematically overestimate} under the existence of spurious correlations (Figure~\ref{fig:figure-agreement}). We also identify certain regimes where performance improvement for the majority subpopulation leads \emph{alarmingly} to \emph{consistent performance deterioration} for the minority subpopulation, thereby highlighting the significance and implications of our findings for ML reliability and fairness.

\end{itemize}

\begin{figure}[ht]
    \centering
    \includegraphics[width=0.48\textwidth]
    {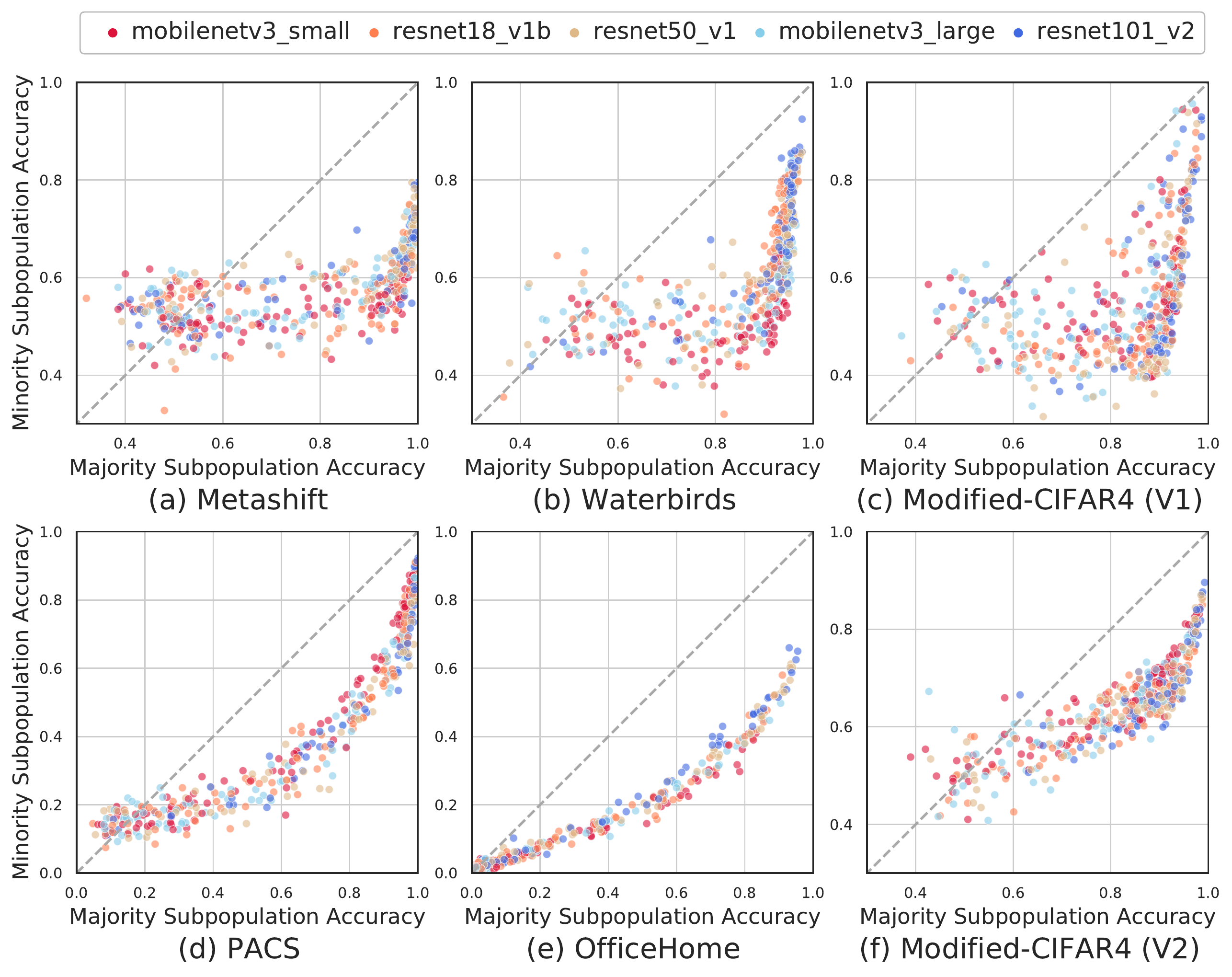}
    \caption{
    \small
    \textbf{A performance breakdown of Figure 1: majority subpopulation vs. minority subpopulation.}
    To gain a deeper understanding of the striking nonlinear correlation between out-of-distribution (OOD) and in-distribution (ID) accuracies observed in Figure 1, we decompose the model's performance into separate evaluations on the majority and minority subpopulations of the OOD test set. 
    \textbf{The results reveal a clear and striking nonlinear correlation, which we term the “\emph{moon shape}” correlation, between the majority subpopulation performance and the minority subpopulation performance.} 
    This nonlinearity is particularly pronounced in datasets constructed with spurious correlations, as seen in the top panels (a-c), while datasets without such correlations, shown in the bottom panels (d-f), exhibit more subtle nonlinearity. 
    }
    \label{fig:figure2}
\end{figure}

\section{Experimental Setup}

\paragraph{Preliminaries: Machine Learning with Diverse Subpopulations}
In our study, we investigate the performance of various machine learning (ML) models in the presence of diverse subpopulations in the data distribution. Specifically, the overall data distribution is denoted as $\mathcal{D}={1,\ldots,D}$, where each subpopulation $d\in \mathcal{D}$ corresponds to a fixed data distribution $P_d$. In our main experiments, we compare the performance of ML models on two different data distributions: (1) the in-distribution (ID), or the training distribution, $P^{tr}=\sum_{d\in \mathcal{D}}r_d^{tr} P_d$, where ${r_d^{tr}}$ denotes the mixture probabilities in the training set, and (2) the out-of-distribution (OOD), which is also a mixture of the $D$ subpopulations, $P^{ts}=\sum_{d\in \mathcal{D}}r_d^{ts} P_d$, where ${r_d^{ts}}$ is the mixture probabilities in the test distribution, but with a different proportion from the training distribution, i.e., $r^{ts}_d \neq r^{tr}_d$ for some $d \in \mathcal{D}$. This setting is known as subpopulation shifts in the literature~\citep{huaxiu,WILDS}.

\paragraph{Experimental Procedure} 
We consider $D=2$ subpopulations for simplicity. 
For the in-distribution (training distribution), we consider a dominating \emph{majority subpopulation} (e.g., $r_{d}^{tr} = 90\%$) and an underrepresented \emph{minority subpopulation} (e.g., $r_{d}^{tr} = 10\%$). 
As for the out-of-distribution, the majority subpopulation and minority subpopulation are equally representative (e.g., $r_{d}^{ts} = 50\%$). 
The goal of our paper is to 
compare the ID performance and the OOD performance across a wide spectrum of ML models. 
Therefore, for each subpopulation shifts dataset, our experimental procedure follows two steps: 
\begin{enumerate}%
\itemsep0.2em
    
    \item 
    Train \emph{500} different ML models independently on the same training set drawn from the training distribution $P^{tr}$ using the empirical risk minimization (ERM) method by varying the model architectures, training durations, and hyperparameters following the search space of commercial AutoML~\citep{AutoGluon}. Details of the training process can be found in the appendix.

    \item 
    For each trained ML model, evaluate the \emph{ID performance} on a test set of held-out samples from the training distribution $P^{tr}$, and the \emph{OOD performance} on a test set drawn from the out-of-distribution $P^{ts}$. We then visualize the correlation of ID and OOD performance on a scatter plot.
    
\end{enumerate}

\paragraph{Subpopulation Shift Datasets}
Based on our survey on the reported cases of ML performance disparity on the minority subpopulation in the wild and prior work~\citep{Domino,Oakden-Rayner2019-zv}, we identified and evaluated on two important categories of subpopulation shift datasets (Figure~\ref{fig:figure_dataset}): 
\begin{itemize}

    \item 
    \textbf{Spurious correlation.} 
    In statistics, a spurious correlation refers to a connection between two variables that appear to be causal, but are not. For example, Figure~\ref{fig:figure_dataset} (a) illustrates a scenario where cat images are mostly indoor and dog images are mostly outdoor (as indicated by the red boxes). A spurious correlation exists between the class labels and the indoor/outdoor contexts. To explore this scenario, we have experimented with three existing datasets in the community: MetaShift~\citep{MetaShift}, Waterbirds~\citep{Sagawa2020-bt}, and Modified-CIFAR4 V1~\citep{Modified-CIFAR-4}.

    \item 
    \textbf{Rare subpopulation.}
    ML models can still underperform on subpopulations that occur infrequently in the training set, even without the presence of obvious spurious correlation. To explore this scenario, we have adopted PACS~\citep{PACS}, OfficeHome~\citep{OfficeHome}, and Modified-CIFAR4 V2~\citep{Modified-CIFAR-4}.

\end{itemize}

\begin{figure}[h]
    \centering
    \includegraphics[width=0.49\textwidth]
    {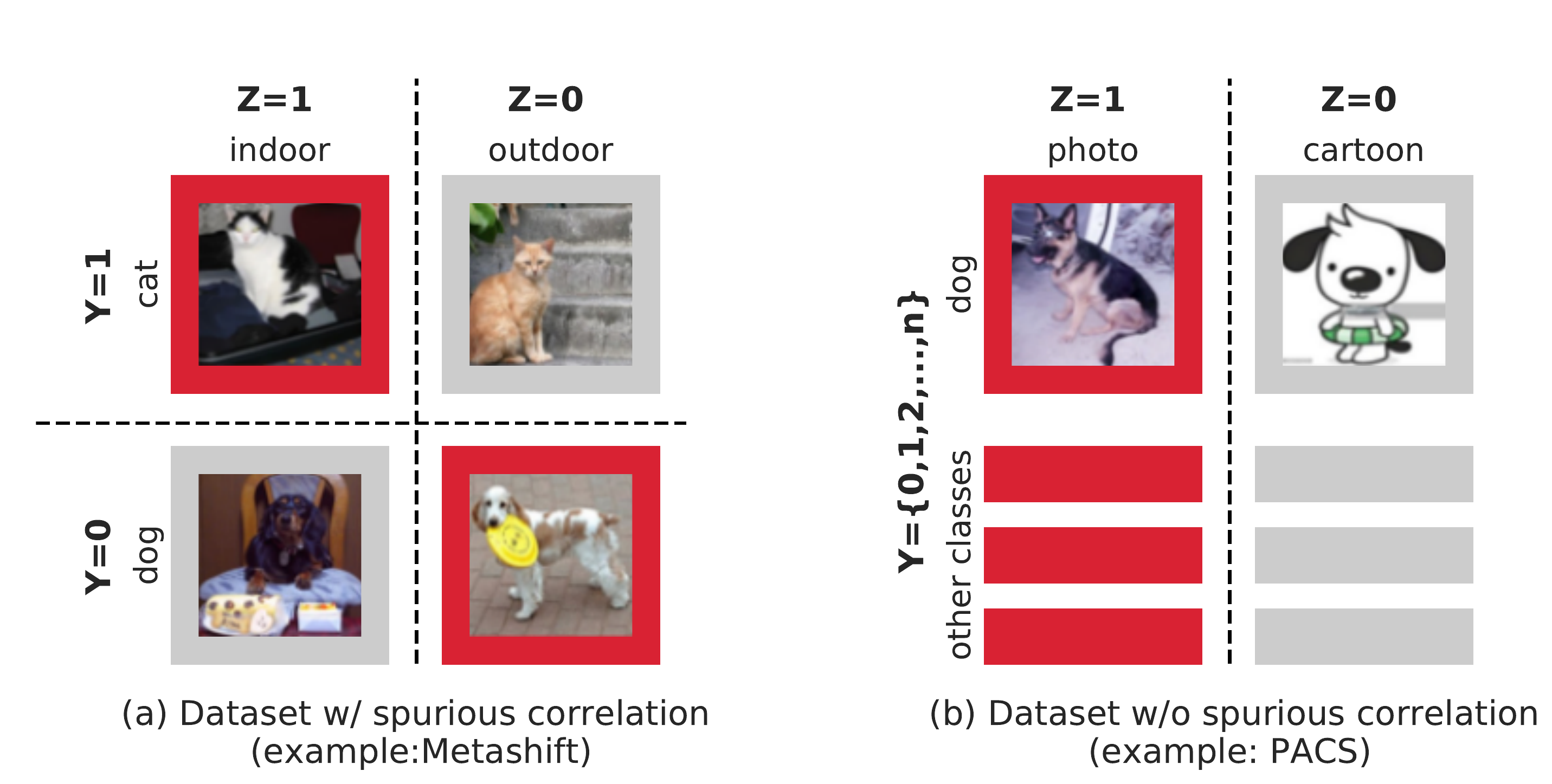}
    \caption{
    \small
    \textbf{Evaluation of subpopulation shift in two dataset configurations: presence or absence of spurious correlations.} As depicted, $Y$ represents the class label, with $Z=1$ and $Z=0$ indicating the majority and minority subpopulations, respectively.
    \textbf{Left:} In the presence of spurious correlation, there exists a correlation between target label $Y$ and $Z$ (e.g. as exemplified by the red boxes, where cat images tend to be predominantly indoor and dog images outdoor);
    \textbf{Right:} Conversely, in the absence of spurious correlation, $Y$ is statistically independent of $Z$. %
    }
    \label{fig:figure_dataset}
\end{figure}

\section{The Moon Shape Phenomenon}
\label{sec:moon-shape-phenomenon}

In this section, we empirically show the \emph{nonlinear} correlation between the ID and OOD performance across multiple subpopulation shifts datasets. 
To understand this phenomenon, we decompose the model's performance into performance on each subpopulation. 
We also found \emph{nonlinear} correlation between the test performance on the \emph{majority subpopulation} and the \emph{minority subpopulation}. 
Moreover, this nonlinearity holds across model architectures, training durations and hyperparameters, and the imbalance between subpopulations. 

\subsection{Nonlinear Correlation of ML Performance Across Data Subpopulations}

\paragraph{Out-of-Distribution vs. In-Distribution} 
Prior research reports a near perfect \emph{linear} correlation between the OOD accuracies and the ID accuracies. 
In contrast, Figure~\ref{fig:figure1} shows the \emph{nonlinear} correlation between the ID and OOD performance across multiple subpopulation shifts datasets. 
Moreover, datasets constructed with spurious correlations (Figure~\ref{fig:figure1} Top) occurs more \emph{nonlinear} compared to 
the datasets with only rare subpopulations (Figure~\ref{fig:figure1} Bottom), which we further confirm and analyze below.

\paragraph{Majority vs. Minority}
To understand this phenomenon, 
we decompose the model's performance into performance on each subpopulation. 
As shown in Figure~\ref{fig:figure2}, there is a \emph{``moon shape'' correlation} (parabolic uptrend curve) between the test performance on the \emph{majority subpopulation} and the \emph{minority subpopulation}. Since we have decomposed by subpopulations, the \emph{nonlinearity} becomes much more visually apparent. 
Figure~\ref{fig:figure2} also confirms that datasets with \emph{spurious correlations} (Figure~\ref{fig:figure2} Top) show more nonlinearity compared to those without (Figure~\ref{fig:figure2} Bottom), which motivates our further analysis on how \emph{spurious correlation} affect the correlation \emph{nonlinearity} ($\S$~\ref{sec:spurious-correlation}).

\textit{Converting from Figure~\ref{fig:figure1} to Figure~\ref{fig:figure2}:} We clarify that the test set is always balanced, with a 50/50 majority to minority ratio when $D=2$. In our subpopulation shift setting, the correlation between "Majority Subpopulation Accuracy vs. Minority Subpopulation Accuracy" can be directly connected to "In-distribution Accuracy vs. Out-of-distribution Accuracy". This is because, in our setting, the in-distribution consists of a mixture of 90\% majority subpopulation and 10\% minority subpopulation, while the test distribution (out-of-distribution) is composed of a 50\% majority subpopulation and 50\% minority subpopulation. Consequently, we have:
\begin{itemize}[leftmargin=0.0cm,itemsep=0.0em]
    \item In-distribution Accuracy = 0.9 $\times$ Majority Subpopulation Accuracy + 0.1 $\times$ Minority Subpopulation Accuracy
    \item Out-of-distribution Accuracy = 0.5 $\times$ Majority Subpopulation Accuracy + 0.5 $\times$ Minority Subpopulation Accuracy
\end{itemize}

\begin{figure}[h]
    \centering
    \includegraphics[width=0.48\textwidth]
    {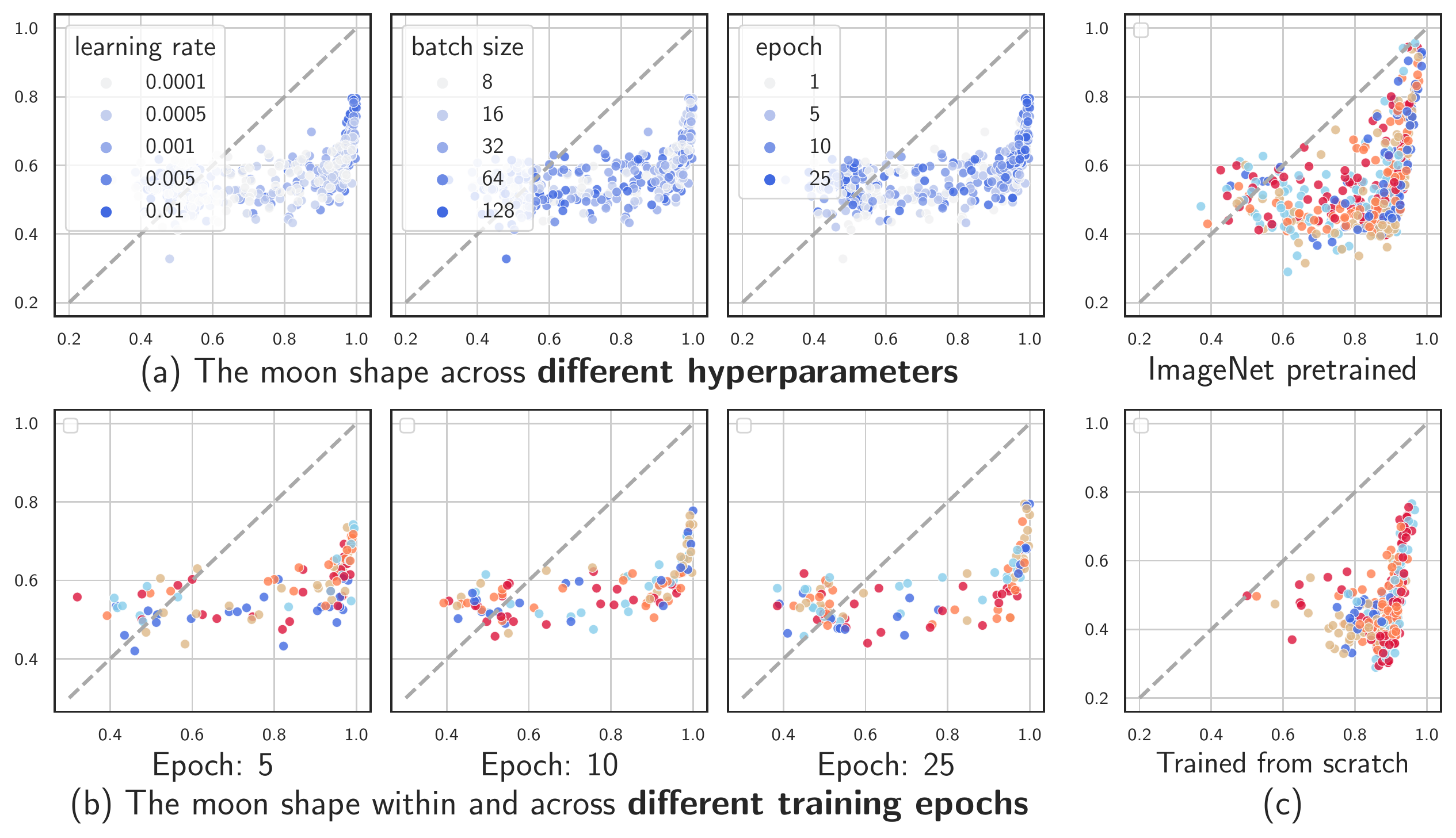}
    \caption{
    \small
    \textbf{Demonstrating the Consistency of the Moon Shape Phenomenon Across Various Factors.}
    The x-axis of each panel represents the performance of the majority subpopulation, while the y-axis represents the performance of the minority subpopulation.
    \textbf{(a)} (Metashift) The moon shape is present regardless of the model architecture, hyperparameters, or training duration utilized. 
    \textbf{(b)} (Metashift) The moon shape is evident at each snapshot and persists across different training epochs.
    \textbf{(c)} (Modified-CIFAR4 (V1)) The moon shape is observed in both models that are pretrained on ImageNet and models that are trained from scratch.
    }
    \label{fig:figure-persist}
\end{figure}

Furthermore, we demonstrate that the moon shape phenomenon holds across various factors including (Figure~\ref{fig:figure-persist}):
\begin{itemize}
\itemsep0.2em
    \item
    \textbf{Model architectures, training durations and hyperparameters (Figure~\ref{fig:figure-persist}\textbf{(a)}).}
    Following the search space of a commercial AutoML library~\citep{AutoGluon}, we varied (1) pretrained model architectures, (2) training durations (i.e., training epochs), (3) hyperparameters such as learning rates and batch size. 
    We found that models lies consistently on the same \emph{``moon shape''}.

    \item
    \textbf{Training dynamics (Figure~\ref{fig:figure-persist} \textbf{(b)}).} 
    We further stratify the dots in Figure~\ref{fig:figure2} (a) (i.e. the trained models) by the number of training epochs. We still find a clear moon shape for each fixed training epoch. %
    Moreover, similar moon shapes persist across different training epochs. Results on other datasets are similar (Supp. Figure~\ref{fig:training-dynamics-other-datasets}). 
    This finding motivates us to focus our analysis on comparing across \emph{different models} rather than comparing the subpopulation performance of a single model across training epochs (which is an interesting direction complementary to our scope).

    \item
    \textbf{Pretrained vs. trained from scratch (Figure~\ref{fig:figure-persist}\textbf{(c)}).}
    Although fine-tuning pretrained models has become a modern paradigm of ML, we also add an experiment of training from scratch, confirming that \emph{moon shape} persists even when training from scratch. 
    This shows that the moon shape is not an artifact of ImageNet pre-trained models, but a much broader phenomenon. 

\end{itemize}

\subsection{Discussion: Why the Moon Shape is not Obvious}

\begin{figure}[h]
    \centering
    \includegraphics[width=0.36\textwidth]
    {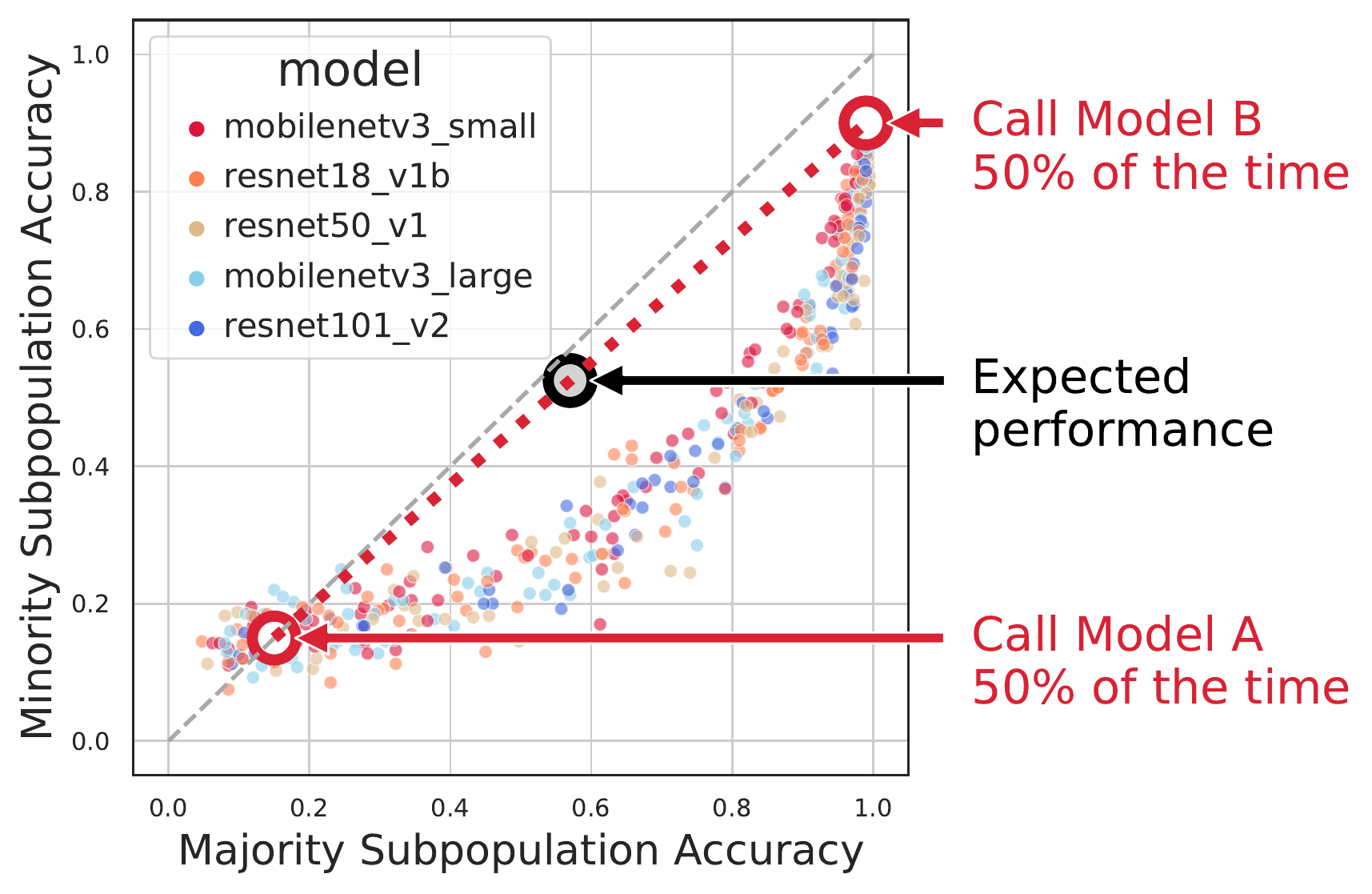}
    \caption{
    \small
    Why the moon shape is not obvious. Mixture of models can fill in the moon shape.
    }
    \label{fig:figure3}
\end{figure}
Figure~\ref{fig:figure3} illustrates one reason why the non-linear correlation structure, represented by the moon shape, is not obvious. A thought experiment is presented in which two models, $A$ and $B$ (indicated by red circles), are interpolated by flipping a biased coin with probability $p$. If the coin lands heads, classification is performed with model $A$, and if it lands tails, classification is performed with model $B$. Varying $p$ in the range $[0,1]$ generates a line between the two models. This interpolation line represents an achievable region for machine learning models, yet our results demonstrate that all models deviate substantially from this line, resulting in the unexpected moon shape.

In summary, the moon shape is intriguing because it contradicts and extends the near-perfect linear correlation reported by previous studies, revealing that performance correlation is more nuanced under subpopulation shifts. Additionally, while it would be expected that the dots of a scatter plot would reside in the \emph{lower triangular area} since machine learning models generally perform worse on the minority subpopulation, our results show that the performance correlation is more concentrated than anticipated, with the dots \emph{concentrated on one curve} rather than spreading out in the lower triangular area.

\begin{figure}[htb] %
    \centering
    \includegraphics[width=0.48\textwidth]{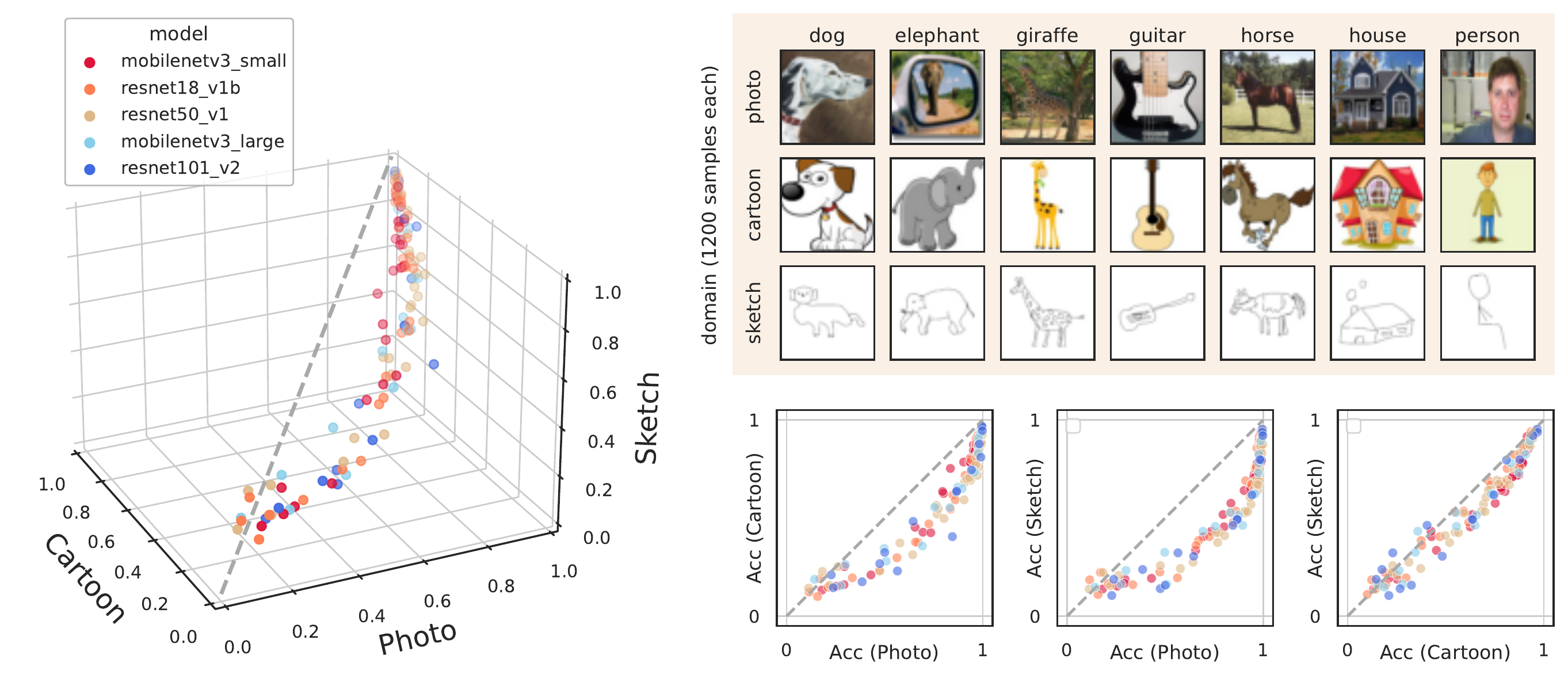}
    \caption{
    \small
    \textbf{Empirical demonstration of the 3D moon shape phenomenon.}
    Results of experiments conducted on the PACS dataset, comprising three subpopulations of images (Cartoon, Photo, and Sketch) are depicted.
    The figure illustrates that the moon shape phenomenon is not limited to just two subpopulations, but extends to three subpopulations as well.
    Importantly, it is demonstrated that the distribution of the number of training samples among the subpopulations is \emph{not} a necessary condition for the emergence of the moon shape phenomenon.
    }
    \label{fig:3D-moonshape}
\end{figure}
\begin{figure}[htb] %
    \centering
    \includegraphics[width=0.48\textwidth]{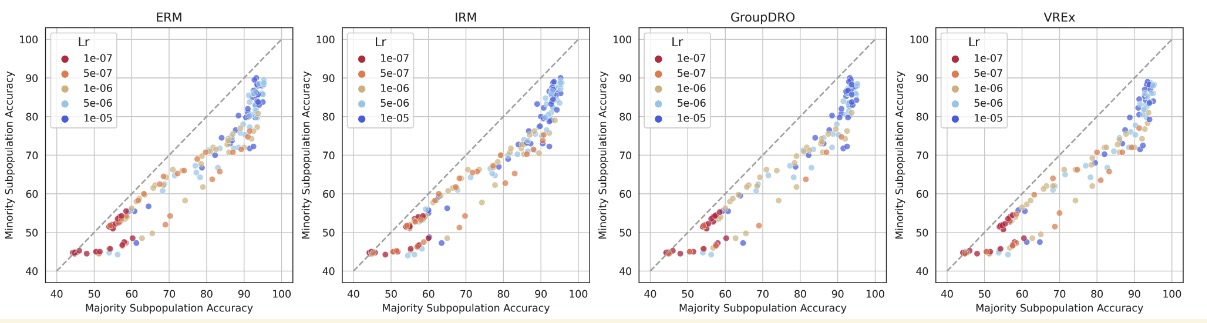}
    \caption{
    \small
\textbf{Generalizability of the moon shape phenomenon beyond Empirical Risk Minimization (ERM).} Our experiments on MetaShift demonstrate that the moon shape phenomenon, previously observed in models trained via ERM, also holds for models trained with various distribution shift algorithms such as Invariant Risk Minimization, GroupDRO, and VREx. This suggests that the "moon shape" is a general phenomenon in machine learning, rather than being specific to ERM.
    }
    \label{fig:OOD-algorithms}
\end{figure}

\subsection{Multi-Subpopulation: 3D Moon Shape}

To establish that the moon shape phenomenon is not limited to just two subpopulations, we conducted an additional experiment on the PACS dataset, which comprises three subpopulations of images (Cartoon, Photo, and Sketch).
The results, depicted in Figure~\ref{fig:3D-moonshape}, clearly exhibit a 3D moon shape, providing empirical evidence for the generality of the moon shape phenomenon beyond two subpopulations.
Furthermore, it is noteworthy that the three subpopulations in our experiment possess an equal number of samples, thereby demonstrating that the \emph{imbalance} of training datasets is \emph{not} a necessary condition for the emergence of nonlinear correlation as represented by the moon shape phenomenon.

\subsection{Generalizability of the Moon Shape Phenomenon to Distribution Shift Algorithms}

Distribution shift remains a significant challenge in machine learning, leading to the development of various algorithms to address it~\citep{JTT}. To explore the generality of the "moon shape" phenomenon, we conducted additional experiments on distribution shift algorithms beyond Empirical Risk Minimization (ERM), including Invariant Risk Minimization (IRM)\citep{IRM}, GroupDRO\citep{GroupDRO}, and VREx~\cite{VREx}. We employed the implementation and hyper-parameter range from DomainBed~\citep{DomainBed} to ensure rigorous and controlled experimental conditions. The results in Figure~\ref{fig:OOD-algorithms} demonstrate that the moon shape phenomenon also holds for IRM, GroupDRO, and VREx, indicating that it is a general phenomenon across different distribution shift algorithms.

\section{The Impact of Spurious Correlation on the Moon Shape}
\label{sec:spurious-correlation}

\subsection{Controlled Experiments: Spurious Correlation Makes the Moon Shape more Nonlinear}

In the previous section (\S~\ref{sec:moon-shape-phenomenon}), we observed that datasets with spurious correlations exhibit increased nonlinearity compared to those without (as shown in Figure~\ref{fig:figure1} and Figure~\ref{fig:figure2}, Top vs Bottom). In this subsection, we conduct well-controlled experiments to quantify the effect of spurious correlation on nonlinearity.

\paragraph{Experiment Design}
We use Modified-CIFAR4 (V1), a subset of the bird, car, horse, and plane classes from CIFAR-10 created by \cite{Modified-CIFAR-4}, as our fixed dataset. We manipulate the degree of spurious correlation between the classification target label (air/land) and the spurious feature (vehicle/animal) in the training data by altering the mixture weights, while maintaining a fixed number of training data points in total (10,000), for each class (5,000), and for each spurious feature (6,000 for vehicle, 4,000 for animal). Additional details can be found in the appendix.

\paragraph{Results and Analysis}
Figure~\ref{fig:figure-spurious} illustrates the results of our experiments with increasingly stronger levels of spurious correlation. As predicted, nonlinearity in performance correlation increases with stronger levels of spurious correlation in the training data. These findings demonstrate that the presence of spurious correlation plays a significant role in shaping the relationship between out-of-distribution and in-distribution performance, an aspect that has been previously overlooked in the literature.

\begin{figure*}[tb]
    \centering
    \includegraphics[width=0.88\textwidth]
    {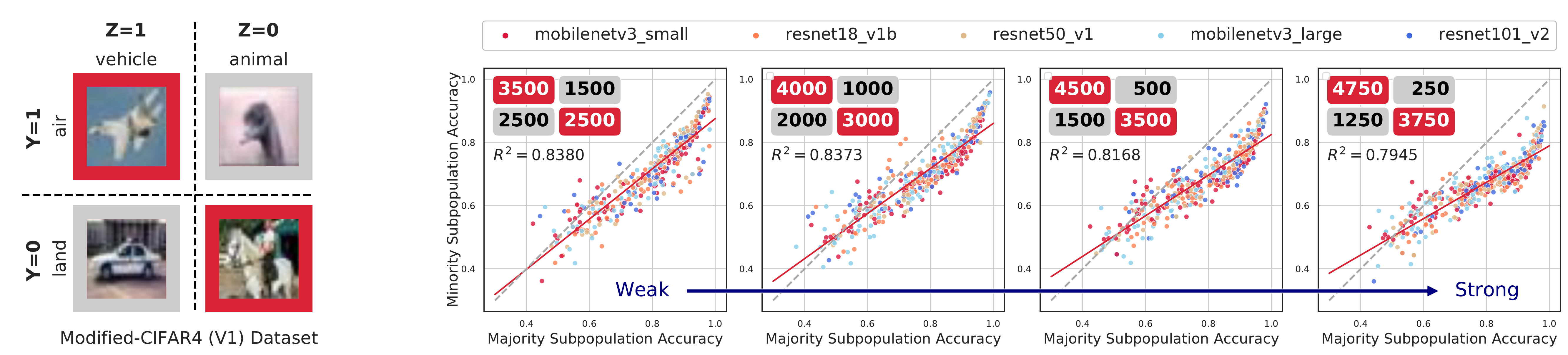}
    \caption{
    \small
\textbf{Stronger spurious correlation creates more nonlinear performance correlation.} The left panel depicts binary classification on Modified-CIFAR4 (V1) with two subpopulations: a majority subpopulation (Z=1) and a minority subpopulation (Z=0). The right panel illustrates that as spurious correlation increases (i.e., more samples in the red boxes), nonlinear performance correlation also increases. The four panels represent different training data, with the number of training samples indicated by the 2x2 tables. The total number of training samples (10,000 images) and the majority:minority ratio (6,000:4,000) are held constant, with evaluation data also fixed.    
    }
    \label{fig:figure-spurious}
\end{figure*}

\begin{figure}[ht]
    \centering
    \includegraphics[width=0.48\textwidth]
    {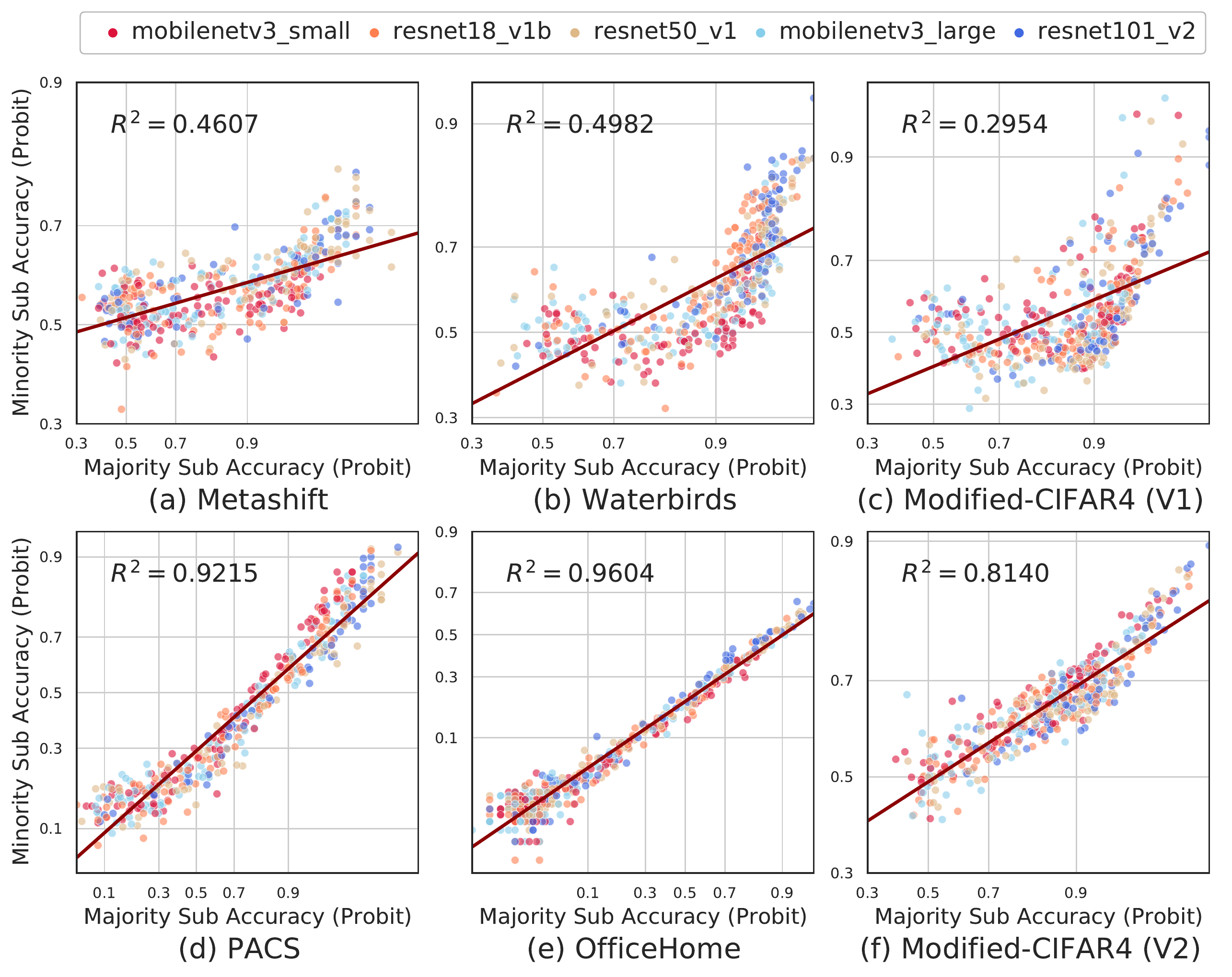}
    \caption{
    \small
\textbf{Probit-scaled comparison of majority and minority subpopulation accuracy.} The top row (a-c) illustrates that, in datasets with spurious correlation, the performance correlation remains nonlinear even under probit transformation. In contrast, the bottom row (d-f) demonstrates that, in the absence of spurious correlation, a linear correlation emerges under probit scaling, as previously reported in literature. Linear fits and corresponding $R^2$ values are included for reference.
    }
    \label{fig:figure-probit}
\end{figure}
\begin{figure}[ht]
    \centering
    \includegraphics[width=0.48\textwidth]
    {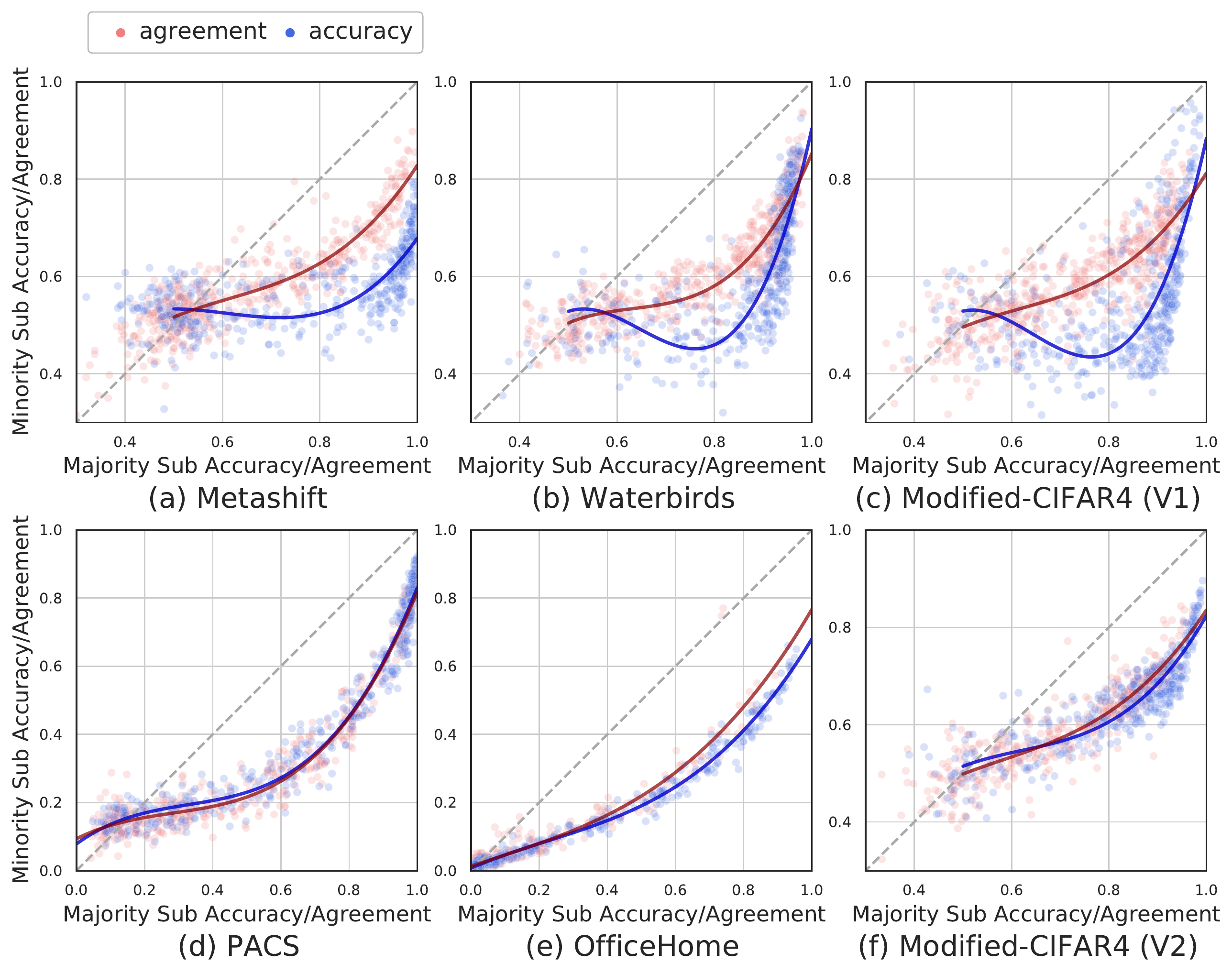}
    \caption{
    \small
\textbf{Model agreements and accuracies in the presence and absence of spurious correlation.} Panels (a-c) demonstrate the significant disparity between the two measures when spurious correlation is present, while panels (d-f) reveal near-perfect alignment between them in the absence of such correlation. To prevent overcrowding, only 500 randomly chosen model agreement pairs with the 500 model accuracies are depicted and both sets of data are smoothed using cubic spline fitting.
    }
    \label{fig:figure-agreement}
\end{figure}

\subsection{Nonlinearity under Probit-Transformed Axes}

Previous research has reported a near-perfect linear correlation between in-distribution and out-of-distribution accuracies, with some studies utilizing probit-transformed axes to enhance the linear trend~\citep{accuracy-on-the-line,robustness-natural-shifts}. In this work, we use the probit transform, which is the inverse of the cumulative density function of the standard Gaussian distribution, to make our results directly comparable to previous studies.

Our findings reveal that, for datasets with spurious correlations, the performance correlation remains nonlinear even under probit transformation (Figure~\ref{fig:figure-probit} top). This confirms that the nonlinear correlations we found are not captured by previous research. However, for datasets without spurious correlations, the performance correlation becomes linear after probit-transformation (Figure~\ref{fig:figure-probit} bottom), as demonstrated by the significant difference in $R^2$ values in the linear fit. This highlights the importance of spurious correlations and how our work complements previous studies.

\subsection{Implications of Model Agreements for Out-of-Distribution Performance Prediction}

\paragraph{Motivation} The ability to accurately predict out-of-distribution (OOD) performance is a valuable application of the \emph{accuracy-on-the-line} phenomenon, as outlined by \citet{agreement}. They propose using model agreements, which are calculated by evaluating the consistency of predictions between pairs of models, as a means of achieving this. They observe a strong linear correlation between OOD and in-distribution (ID) performance for model agreements, referred to as the \emph{agreement-on-the-line} phenomenon. Additionally, they find a precise match between model accuracy and model agreements, which allows for OOD performance prediction using unlabeled data.

The agreement-on-the-line approach proposed by \citet{agreement} reveals a fascinating phenomenon related to the agreement between pairs of neural network classifiers. Specifically, when accuracy-on-the-line holds, a strong linear correlation is observed between out-of-distribution (OOD) agreement and in-distribution (ID) agreement for any two pairs of neural networks, regardless of architectural differences. Notably, the linear trend of ID vs. OOD agreement exhibits the same empirical slope and intercept as the linear trend between ID and OOD accuracy. This discovery, referred to as "agreement-on-the-line," has significant practical implications.

The agreement-on-the-line phenomenon allows for the prediction of OOD accuracy for classifiers without the need for labeled data. OOD agreement can be estimated using only unlabeled data by assessing the disagreement on an unlabeled dataset between pairs of neural networks trained with different sources of randomness. This provides the empirical slope and intercept of ID vs. OOD agreement, which can then be used as the empirical slope and intercept of ID and OOD accuracy.

Inspired by the potential of agreement-based OOD prediction, we conduct experiments on subpopulation shift datasets and present the results using smoothing cubic spline interpolation (Figure~\ref{fig:figure-agreement}).

\paragraph{Results} Our results indicate that the presence of spurious correlation can significantly impact the effectiveness of the agreement-based approach. In the absence of spurious correlation, the agreement results align almost perfectly with accuracy results, consistent with previous work. However, with spurious correlation, the agreement results deviate from accuracy results, forming a distinct moon-shaped pattern that lies above the accuracy moon shape.

This deviation is alarming as it suggests that naively applying the current agreement-based approach for predicting OOD performance would \emph{systematically overestimate} performance in the presence of spurious correlation. This can be understood intuitively as the presence of spurious correlation causes models to rely on a variable correlated with the class label for predictions, breaking the independence of each model's prediction error and leading to increased agreement between models.

\section{Related Work}

\paragraph{Linear Correlations Between ID and OOD Performances}
Existing research mostly reports \emph{linear} correlations between ID and OOD performances. 
The linear correlations were first reported in recent dataset reconstruction settings including ImageNet-V2~\citep{ImageNet-V2}, CIFAR-10.1~\citep{CIFAR-10.1}, CIFAR-10.2~\citep{CIFAR-10.2}, 
where new test sets of popular benchmarks are collected closely following the original dataset creation process. 
As there are subtle differences in the dataset creation pipeline, the test performance on the new test set is often lower, but appears to be linearly correlated with the performance on the original test set~\citep{CIFAR-10.2,SQuAD-dataset-reconstruction,CIFAR-10.1,ImageNet-V2,MNIST-dataset-reconstruction}. 
Later researchers also found the linear trends in the context of cross-benchmark evaluation~\citep{robustness-natural-shifts,accuracy-on-the-line}, 
and transfer learning~\citep{ImageNet-Transfer-Better,OOD-Fine-Tuning}, where a model's ImageNet test accuracy linearly correlates with the transfer learning accuracy. 
Similar linear trends are also observed in sub-type shifts~\citep{ImageNet-C-Dataset,BREEDS} (e.g., the training data for the “dog” class are all from a specific breed while the test data come from another breed). 
Different from these studies, we (1) focus on subpopulation shifts, where we also present the first systematic study on the performance correlation between data subpopulations, and (2) find \emph{nonlinear} correlations of ML performance across data subpopulations, which is not captured in previous work. 
Importantly, we show that for datasets with spurious correlations, the performance correlations still remain nonlinear under probit scale. This confirms that the nonlinear (“moon shape”) correlation phenomenon is indeed not captured by previous work.

\paragraph{ML with Diverse Subpopulations} 
A major challenge in ML is that a model can have very disparate performances even when it is applied to different subpopulations of its training and evaluation data. 
Models with low average error can still fail on particular groups of data points~\citep{Fairness-Without-Demographics,Gender-Shades,Afican-American-English}. 
For example,~\citep{clinical-DRO} reported that predictive models for clinical outcomes which are accurate on average in a patient population underperform drastically for some subpopulations, potentially introducing or reinforcing inequities in care access and quality.
Similar performance disparity has also been observed in 
radiograph classification \citep{Badgeley2019-al, zech2018variable, DeGrave2021-xv}, 
face recognition~\citep{Face-Recognition-Fairness,Gender-Shades}, 
speech recognition~\citep{Racial-disparities-in-automated-speech-recognition,Afican-American-English,language-identification-dialects}, 
academic recommender systems~\citep{academic-recommender-systems-gender}), 
and
automatic video captioning~\citep{Youtube-captioning-Gender-Dialect}, among others. 
As the existence of ML performance disparity across subpopulations sternly undermines the trustworthiness, reliability, and fairness of ML models, our work makes a critical step towards the empirical understanding of how ML performances between data subpopulations are correlated. 
\section{Discussion} 

In this study, we presented a novel and significant discovery of a nonlinear correlation, which we term the “moon shape” phenomenon, between the performance of models on majority and minority data subpopulations. Through meticulous experimentation and analysis across a variety of datasets, models, and training epochs, we demonstrated that this phenomenon is persistent and has far-reaching implications for model selection and performance evaluation. We emphasize the following key aspects regarding the clarification of implications of the moon shape phenomenon:

\textbf{Rethinking Accuracy-on-the-Line:} Our findings present counterexamples to the previously reported linear correlation between in-distribution (ID) and out-of-distribution (OOD) model performance during distribution shifts. We identify the moon shape phenomenon, which exhibits nonlinear yet precise correlations between ID and OOD accuracies, expanding the conventional understanding and offering new insights~\cite{robustness-natural-shifts,accuracy-on-the-line,kaplun2022deconstructing,CIFAR-10.2,SQuAD-dataset-reconstruction,CIFAR-10.1,ImageNet-V2,MNIST-dataset-reconstruction,ImageNet-Transfer-Better}.

\textbf{Implications for OOD Performance Estimation:} Our results reveal that practitioners relying on a linear trend to predict OOD performance may systematically \emph{overestimate} model performance under subpopulation shifts, particularly when spurious correlations are present. This finding suggests that alternative methods are needed to accurately estimate OOD performance. One plausible explanation for this observation is that different ML models, regardless of their performance levels, tend to capture similar spurious correlations in the training data to varying degrees. This breaks the independence of each model's prediction error, leading these models to make similar errors and resulting in higher agreement between them.

\textbf{Impact on Machine Learning Reliability and Fairness:} Our work highlights that ML performance between data subpopulations can display more intricate correlations than previously assumed. Furthermore, we demonstrate situations where, given the presence of spurious correlations, performance improvement for the majority subpopulation coincides with a consistent decline in performance for the minority subpopulation. This observation has significant implications for the reliability and fairness of ML models in real-world applications.

The increasing use of automated machine learning (AutoML) in model building makes selecting models that perform well across diverse subpopulations a paramount challenge. Our results indicate that when there is no spurious correlation, models with higher aggregate performance tend to also perform well on minority subpopulations. However, in the presence of spurious correlation, the situation becomes more complex, with a phase transition point between negative and positive correlation. In settings where subpopulations performance is important (e.g. fairness considerations), we recommend AutoML practitioners to use similar type of scatter plots as our Figure~\ref{fig:figure2} to diagnose model selection. 

Subpopulation shift is a ubiquitous phenomenon in machine learning applications, and our work highlights the importance of understanding the nonlinear effects of model improvement on performance in different subpopulations. Further research and analysis of this nonlinear pattern is a crucial direction for future work, as it has the potential to inform the development of more equitable and responsible machine learning models, and contribute to addressing the fairness considerations in ML.

\section*{Data and Code Availability} 
All original data and code has been deposited at Github under \url{https://github.com/yining-mao/Moon-Shape-ICML-2023} and is publicly available as of the date of publication. 

\section*{Acknowledgements} 
We thank Jonas Mueller, Rasool Fakoor and Shirley Wu for discussions. J.Z. is supported by the National Science Foundation (CCF 1763191 and CAREER 1942926), the US National Institutes of Health (P30AG059307 and U01MH098953) and grants from the Silicon Valley Foundation and the Chan-Zuckerberg Initiative.

\section*{Declaration of Interests} The authors declare no conflict of interest.

\bibliography{main}
\bibliographystyle{icml2023}

\newpage
\appendix
\onecolumn
\newpage
\clearpage
\newpage
\onecolumn

\section{Extended Description of Experiment Setups}

\subsection{Subpopulation Shift Datasets}

We categorize our subpopulation shift datasets based on the underlying reason that ML models exhibits degraded performance on the minority subpopulation. 
Based on our survey and prior work~\cite{Domino,Oakden-Rayner2019-zv}, we identified two important scenarios and the corresponding datasets as follows:
\begin{itemize}
\itemsep0em

    \item 
    \textbf{Spurious correlation.} 
    In statistics, a spurious correlation refers to a connection between two variables that appear to be causal but are not. 
    Take the Cat vs. Dog task as an example: With cat images mostly indoor and dog images mostly outdoor, A spurious correlation exists between the class labels and the indoor/outdoor contexts~\citep{MetaShift}.
    To explore the scenario of spurious correlation, we experiment with three existing datasets in the community: MetaShift~\citep{MetaShift}, Waterbirds~\citep{Sagawa2020-bt}, and {Modified-CIFAR4 V1}~\citep{Modified-CIFAR-4}. The detailed setups are as follows:
    
    \begin{itemize}
        \item \textbf{MetaShift-Cat-Dog}~\cite{MetaShift}: A cat vs. dog binary classification task, where cat images are mostly indoor and dog images are mostly outdoor. We choose indoor context as the majority subpopulation and outdoor context as the minority subpopulation.
        In the training set, with spurious correlation, $88\%$ of the cat images are \emph{indoors}, and $88\%$ of the dog images are \emph{outdoors}. 
        In the ID test set, the two subpopulations are in the same proportion as the train set;
        In the OOD test set, the two subpopulations are equally represented. 
        Same for the following datasets.
        
        \item \textbf{Waterbirds}~\cite{Sagawa2020-bt}: A waterbird vs. landbird binary classification task, with waterbirds (landbirds) more frequently appearing against a water (land) background in the training distribution. Here the majority subpopulation is water background and the minority subpopulation is land background. We select $80\%$ of the waterbird images with water background, and $80\%$ of the landbird images with land background.
        
        \item \textbf{Modified-CIFAR4 V1}: Created by ~\cite{Modified-CIFAR-4} based on CIFAR-10, a binary classification task to predict whether the image subject moves primarily by air (plane/bird) or land (car/horse), which is spuriously correlated with whether the image contains an animal (bird/horse) or vehicle (car/plane). 
        Here the majority subpopulation is vehicle domain and the minority subpopulation is animal domain. We select $90\%$ of the air images as “air-vehicle(airplane)”, and $90\%$ of the land images as “land-animal(horse)”.
        
    \end{itemize}
    
    \item 
    \textbf{Rare subpopulation.}
    Without spurious correlation, ML models can still underperform on subpopulation that
    occur infrequently in the training set (e.g. patients with a darker skin tone, photos taken at night). 
    Since the rare subpopulation will not significantly affect model loss during training, the model may fail to learn to classify examples within theis subpopulation. 
    To explore the rare subpopulation scenario (without spurious correlation), we adopted {PACS}~\cite{PACS}, {OfficeHome}~\cite{OfficeHome}, {Modified-CIFAR4 V2}~\cite{Modified-CIFAR-4} for the experiments.
    For each dataset, we select 2 domains independent of the target $Y$.
    For example, for PACS, we construct a training set where most of the training images ($67\%$) come from the photo domain, and the rest ($33\%$) come from the cartoon domain, but the domain does not correlate with the class label.
    
    \begin{itemize}
        \item \textbf{PACS}~\cite{PACS}: An image classification task of 7 classes.
        Most of the training images (1200 images, $67\%$) come from the photo domain, and the rest (600 images, $33\%$) come from the cartoon domain. 
        Here the majority subpopulation is the photo domain, and the minority subpopulation is the cartoon domain. 
        \item \textbf{OfficeHome}~\cite{OfficeHome}: An object recognition task of 65 classes. 
        Most of the training images (3965 images, $83\%$) come from the product domain, and the rest (800 images, $17\%$) come from the clipart domain. 
        Here the majority subpopulation is the product domain, and the minority subpopulation is the clipart domain. 
        
        \item \textbf{Modified-CIFAR4 V2}~\cite{Modified-CIFAR-4}: A binary classification task on air vs. land. The majority subpopulation (4500 images, $90\%$ in training) is “air-vehicle(airplane)”,  “land-vehicle(automobile)”, and the minority subpopulation (500 images, $10\%$ in training) is “air-animal(bird)”, “land-animal(horse)” . 
        
    \end{itemize}

\end{itemize}

\subsection{Configuration space of different ML models}

The goal of our paper is to compare the ID and OOD performance accross a wide spectrum of ML models. Therefore, for each subpopulation shifts dataset, we vary the model architectures, training durations, and hyperparameters to explore the performances of different ML models.

Specifically, we follow the search space of AutoGluon, the state-of-the-art commercial AutoML library~\cite{AutoGluon}, to train 500 different ML models with varying training settings.
For each dataset, we implement with 5 model architectures, namely mobilenetv3\_small, resnet18\_v1b, resnet50\_v1, mobilenetv3\_large, and resnet101\_v2. We set the learning rates to be in the search space of $\{0.0001, 0.0005, 0.001, 0.005, 0.01\}$ and batch sizes in $\{8, 16, 32, 64, 128\}$. The training durations are set to $\{1, 5, 10, 25\}$ epochs, as we did not find significant improvements from training for more epochs for any dataset. Together with the 5 model architectures, 5 learning rates, 5 batch sizes, and 4 training durations, we train 500 different ML models independently with all the combinations of different model architectures and hyperparameters.

\subsection{Implementation and Reproducibility}

Our implementation framework is based on MXNet~\cite{chen2015mxnet} and AutoGluon~\cite{AutoGluon}. Following the training configurations in AutoGluon, we adopt the Nesterov accelerated gradient optimizer with momentum=0.9. Other unspecified parameters are set to the AutoGluon defaults. There is no learning rate decay. All ML models are fine-tuned starting from its ImageNet pre-trained checkpoints. 
Code and data are available at \url{https://github.com/yining-mao/Moon-Shape-ICML-2023}

\subsection{Controlled Experiments on Spurious Correlation}
\label{appendix-subsec:controlled-experiments}

\paragraph{Setup Details}
The Modified-CIFAR4 (V1) in Figure~\ref{fig:figure-spurious} was created by ~\citep{Modified-CIFAR-4} based on CIFAR-10 by subsetting to the bird, car, horse, and plane classes. 
This is a binary classification task to predict whether the image subject moves primarily by air (airplane/bird) or land (automobile/horse), which is spuriously correlated with 2 domains where the image contains an animal (bird/horse) or vehicle (automobile/airplane). Here the majority subpopulation is vehicle domain as indicated by $Z=1$ in Figure~\ref{fig:figure-spurious}; the minority subpopulation is animal domain as indicated by $Z=0$ in Figure~\ref{fig:figure-spurious}.

We modulate the degree of \emph{spurious correlations} between the classification target label (air/land) and spurious feature (vehicle/animal) in the training data by changing the \emph{mixture weights} in the training data. The $2\times2$ table in each panel in Figure~\ref{fig:figure-spurious} indicates the dataset construction procedure. 
Specifically, we ensure that the dataset is class-balanced: i.e., 5,000 images for both $Y=0$ and $Y=1$, and fixed the ratio of spurious feature vehicle $Z=1$ ($60\%$ in training, i.e., 6,000 images) and spurious feature animal $Z=0$ ($40\%$ in training, i.e., 4,000 images). 
We increase the level of the spurious correlation between the classification target label (air/land) and spurious feature (vehicle/animal) by increasing the number of samples in air-vehicle(airplane)” and “land-animal(horse)” as indicated by the red boxes in Figure~\ref{fig:figure-spurious}.
Formally, since we fix (1) the total number of data points in the training set ($10,000$), (2) the ratio of data point number in each class ($P(Y=1)=0.5, P(Y=0)=0.5$), and (3) the ratio of data point number in each spurious features ($P(Z=1)=0.6, P(Z=0)=0.4$), there is effectively only one degree of freedom left, which we vary to change the level of spurious correlation.

\section{Additional Experimental Results}
\textbf{Training dynamics.} We show that the moon shape persists both across and within different training epochs in \textbf{Figure~\ref{fig:figure-persist} (b)}. Results on other datasets are similar as shown in \textbf{Supp. Figure~\ref{fig:training-dynamics-other-datasets}}. 
 \begin{figure*}[tb]
    \centering
    \includegraphics[width=0.75\textwidth]
    {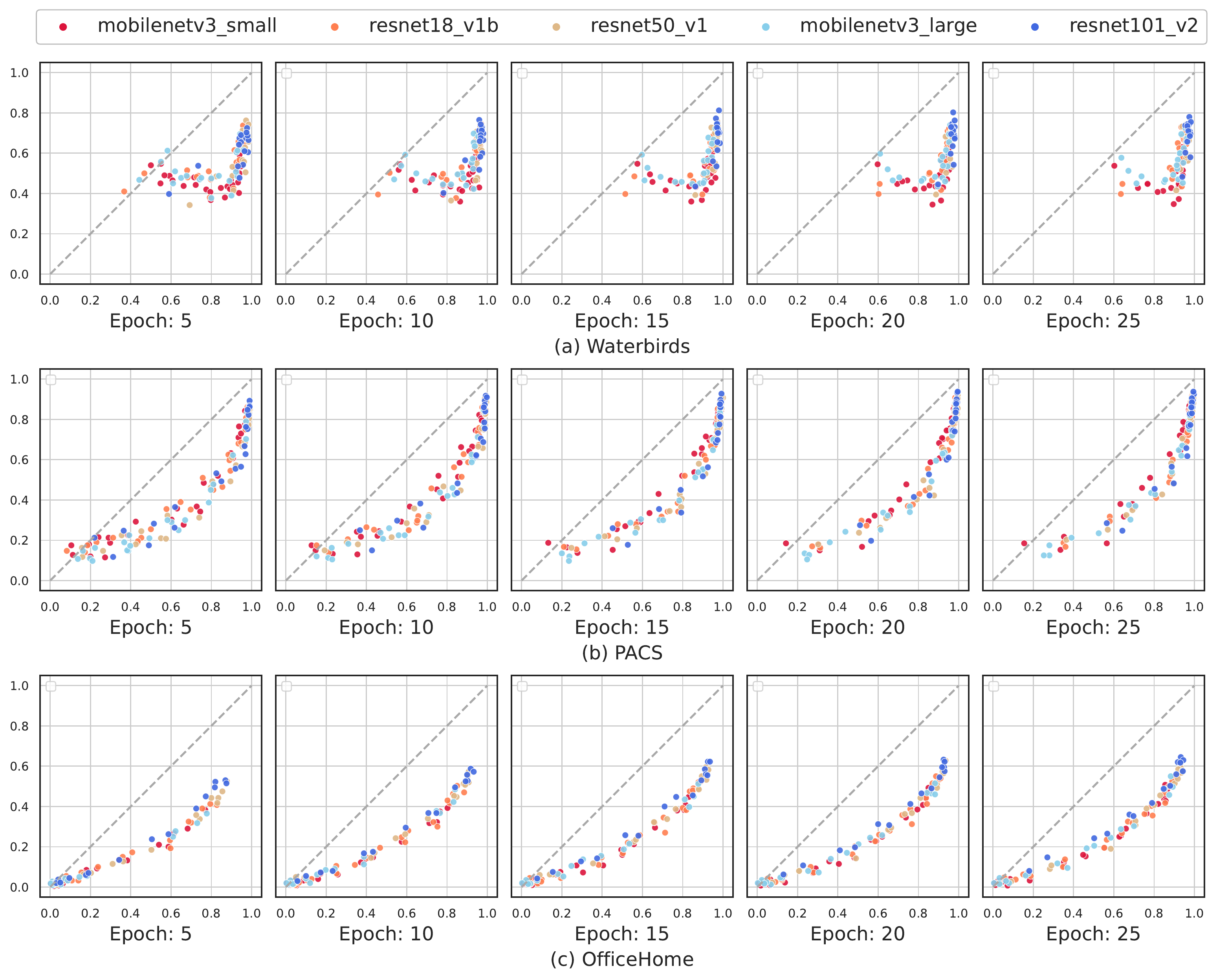}
    \caption{
    \small
    \textbf{The moon shape persists across different training epochs.
    Results on other datasets similar to Figure~\ref{fig:figure-persist}. 
    }
    We stratify Figure~\ref{fig:figure2} based on the number of training epochs. 
    The x-axis indicates majority subpopulation performance. 
    The y-axis indicates minority subpopulation performance. 
    Most of the models have converged after 10 epochs. 
    The moon shape is apparent in each snapshot and persists across training epochs.  
    }
    \label{fig:training-dynamics-other-datasets}
\end{figure*}

\section{Extended Related Work}

\paragraph{Linear correlations between ID and OOD performances}
Existing research mostly reports \emph{linear} correlations between ID and OOD performances. 
The linear correlations were first reported in recent dataset reconstruction settings including ImageNet-V2~\cite{ImageNet-V2}, CIFAR-10.1~\cite{CIFAR-10.1}, CIFAR-10.2~\cite{CIFAR-10.2}, 
where new test sets of popular benchmarks are collected closely following the original dataset creation process. 
As there are subtle differences in the dataset creation pipeline, the test performance on the new test set is often lower, but appears to be linearly correlated with the performance on the original test set~\cite{CIFAR-10.2,SQuAD-dataset-reconstruction,CIFAR-10.1,ImageNet-V2,MNIST-dataset-reconstruction}. 
Later researchers also found the linear trends in the context of cross-benchmark evaluation~\cite{robustness-natural-shifts,accuracy-on-the-line}, 
and transfer learning~\cite{ImageNet-Transfer-Better,OOD-Fine-Tuning}, where a model's ImageNet test accuracy linearly correlates with the transfer learning accuracy. 
Similar linear trends are also observed in sub-type shifts~\cite{ImageNet-C-Dataset,BREEDS} (e.g., the training data for the “dog” class are all from a specific breed while the test data come from another breed). 
Different from these studies, we (1) focus on subpopulation shifts, where we also present the first systematic study on the performance correlation between data subpopulations, and (2) find \emph{nonlinear} correlations of ML performance across data subpopulations, which is not captured in previous work. 
Importantly, we show that for datasets with spurious correlations, even with probit-transformed axes as used by several prior work~\cite{accuracy-on-the-line,ImageNet-V2,robustness-natural-shifts}, the performance correlations still remain nonlinear. This confirms that the nonlinear (“moon shape”) correlation phenomenon is indeed not captured by previous work.

\paragraph{ML with diverse subpopulations} 
A major challenge in ML is that a model can have very disparate performances even when it's applied to different subpopulations of its training and evaluation data. 
Models with low average error can still fail on particular groups of data points~\cite{Fairness-Without-Demographics,Gender-Shades,Afican-American-English}. 
For example, 
predictive models for clinical outcomes that are accurate on average in a patient population are reported to underperform drastically for some subpopulations, potentially introducing or reinforcing inequities in care access and quality~\cite{clinical-DRO}. 
Similar performance disparity, have also been observed in radiograph classification~\cite{Badgeley2019-al, zech2018variable, DeGrave2021-xv}, 
face recognition~\cite{Face-Recognition-Fairness,Gender-Shades}, 
speech recognition~\cite{Racial-disparities-in-automated-speech-recognition,Afican-American-English,language-identification-dialects}, 
academic recommender systems~\cite{academic-recommender-systems-gender}), 
and
automatic video captioning~\cite{Youtube-captioning-Gender-Dialect}, among others. 
Worse, as model accuracy affects user retention, the minority group might shrink and thus even amplifies the performance disparity over time~\cite{Fairness-Without-Demographics,ML-Credit-Markets}.  
These case studies highlight the importance of understanding the ML performance disparity across subpopulations.

\section{Extended Analysis: Simulation Studies}

\newcommand{\nmin}{n_\mathsf{min}}
\newcommand{\nmaj}{n_\mathsf{maj}}
\newcommand{\ntot}{n}
\newcommand{\xcau}{x_\mathsf{core}}
\newcommand{\xspu}{x_\mathsf{spu}}

\newcommand{\dcau}{d_\mathsf{core}}
\newcommand{\dspu}{d_\mathsf{spu}}

\newcommand\sN{\ensuremath{\mathcal{N}}}
\newcommand\R{\ensuremath{\mathbb{R}}} %
\newcommand{\sigcau}{\sigma_\mathsf{core}}
\newcommand{\sigspu}{\sigma_\mathsf{spu}}

\newcommand{\pmajmath}{p_\mathsf{maj}}
\newcommand{\scrmath}{r_\mathsf{s:c}}
\newcommand{\scdmath}{d_\mathsf{s:c}}

\paragraph{Simulation Data distribution.}

We construct a synthetic dataset that replicates the "moon shape" phenomenon in section 3. The label $ y \in \{1,-1\} $ is spuriously correlated with a spurious attribute $ a \in \{1,-1\} $.

Inspired by \cite{Overparameterization-Exacerbates-Spurious-Correlations}, 
we divide our training data into two groups: majority group of size $\nmaj$ with $a=y$, and minority group of size $\nmin$ with $a=-y$. Both of majority and minority group have their own distribution over input features $x = [\xcau, \xspu] \in \R^{\dcau+\dspu}$ comprising core features $\xcau \in \R^{\dcau}$  generated from the label/core attribute $y$, and spurious features $\xspu \in \R^{\dspu}$ generated from the spurious attribute $a$, and the core and spurious features are both balanced in each group:
\begin{align}
  \label{eqn:toy1}
  \xcau \mid y \sim \sN(y\mathbf{1},\sigcau^2I_d)\nonumber\\
  \xspu \mid a \sim \sN(a\mathbf{1},\sigspu^2I_d).
\end{align}

We fix (1) the total number of training points $n$ as 3000, (2) $\sigcau=10$, (3) $\sigspu=1$, and (4) the training model as logistic regression model with different hyper parameters to train on the synthetic dataset. We test the performance on the majority group and the minority group separately, and conduct several controlled experiments to study the moon shape phenomenon and the effect of the spurious correlation on synthetic dataset.

  \begin{figure*}[tb]
    \centering
    \includegraphics[width=0.45\textwidth]
    {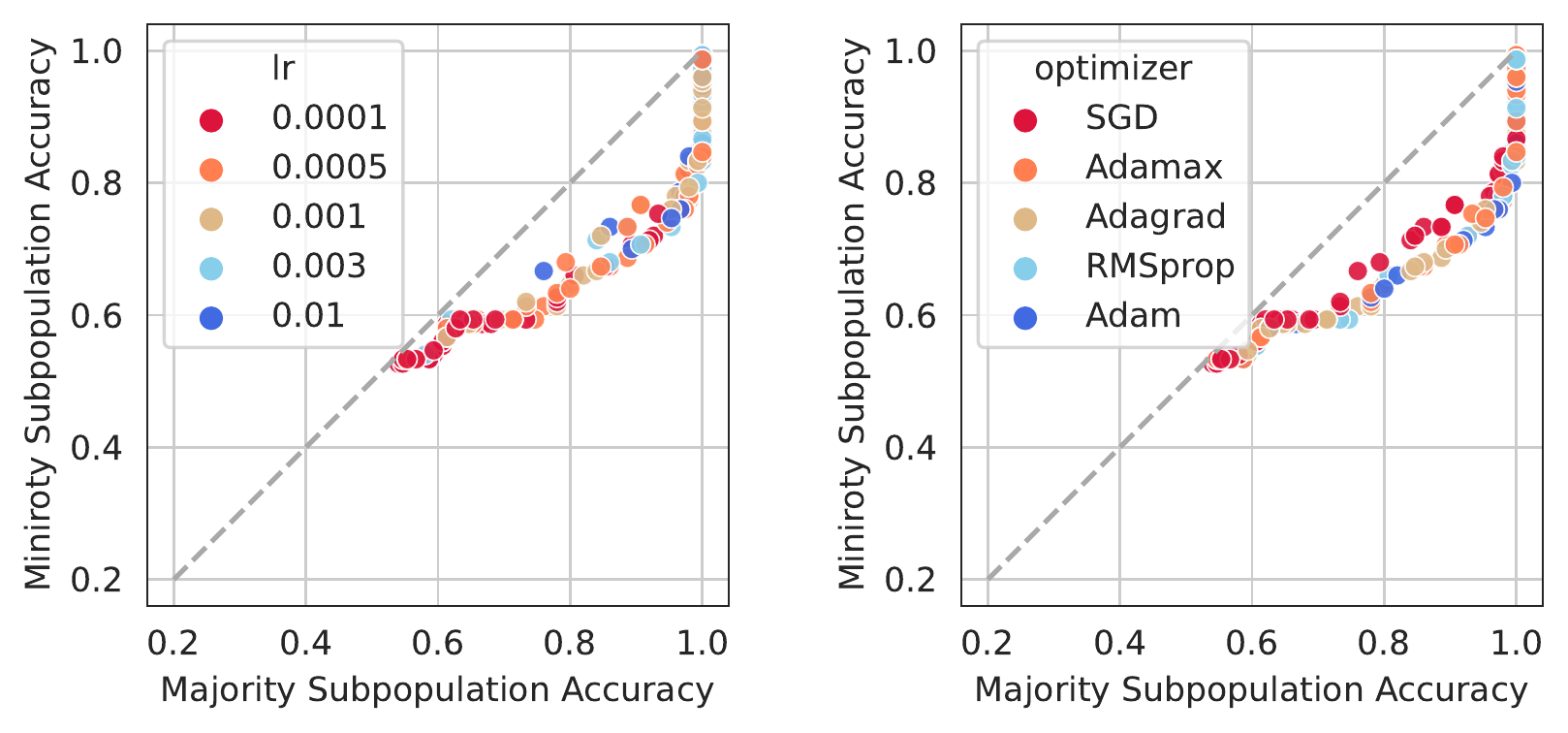}
    \caption{
    \small
    \textbf{We present a simple simulation study which successfully replicates the moon shape phenomenon.} 
    Moon-shape trend holds in synthetic dataset and persists across hyper parameters.
    }
    \label{fig:figure_moonshape}
\end{figure*}

\paragraph{Experiments and Results}  
We find the moon shape trend as we observe on real datasets, which indicates the crucial role of the existence of spurious correlation in shaping the moon shape trend. And the moon shape correlation holds across hyperparameters. (Figure~\ref{fig:figure_moonshape})

\noindent
\begin{minipage}[t]{0.75\textwidth}

We draw the dots when the models have converged in Figure~\ref{fig:figure_epoch}. Different from the real datasets, the dots do not perform the moon shape when the models have converged, but centralize at the top right corner under the synthetic dataset setting since both of majority and minority accuracy have converged to 1. And there are also some dots overlapping with the same majority and minority accuracy, which makes the dots fewer than the number of trials. This shows the difference between the synthetic datasets and the real datasets.

In Figure~\ref{fig:figure_spurious} , we conduct multiple controlled experiments on the synthetic dataset to explore the effect of the level of spurious correlation on moon shape, which is discussed in section 3.2. We define \emph{spurious-core dimension ration} (SDR)  as $\scdmath = \dspu/\dcau$, and define $\ntot = \nmaj + \nmin$ as the total number of training points, with $\pmajmath = \nmaj/\ntot$ as the ratio of majority group. Both of SDR and $\pmajmath$ can reflect the level of the spurious correlation. The higher it is, the stronger spurious correlation there is in the train dataset. The results show the increasing of spurious correlation results in the increasing curvature in the moon shape. 

\end{minipage}\hfill
\begin{minipage}[t]{0.23\textwidth}
  \centering\raisebox{\dimexpr \topskip-\height}{%
  \includegraphics[width=0.95\textwidth]{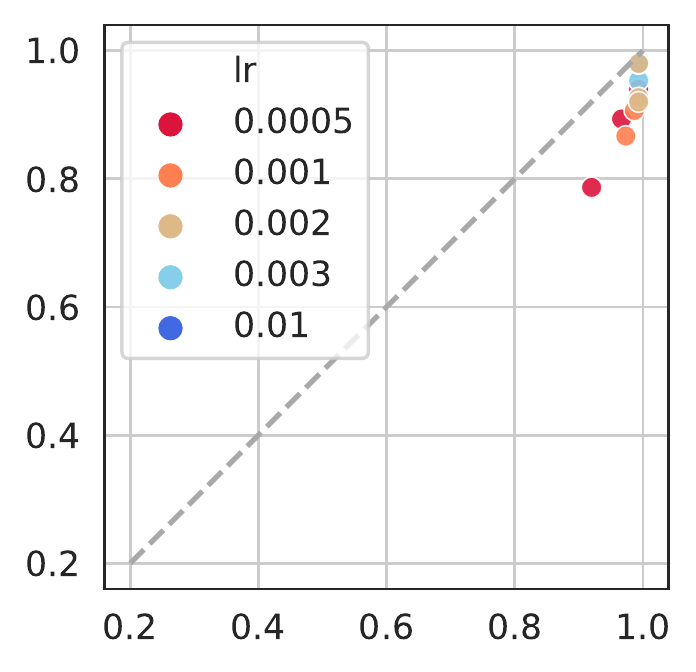}}
  \captionsetup{type=figure}
  \captionof{figure}{
  \small
  The dots centralize in the top right corner when the models have converged in the synthetic dataset.
  }
  \label{fig:figure_epoch}
\end{minipage}

 \begin{figure*}[tb]
    \centering
    \includegraphics[width=0.75\textwidth]
    {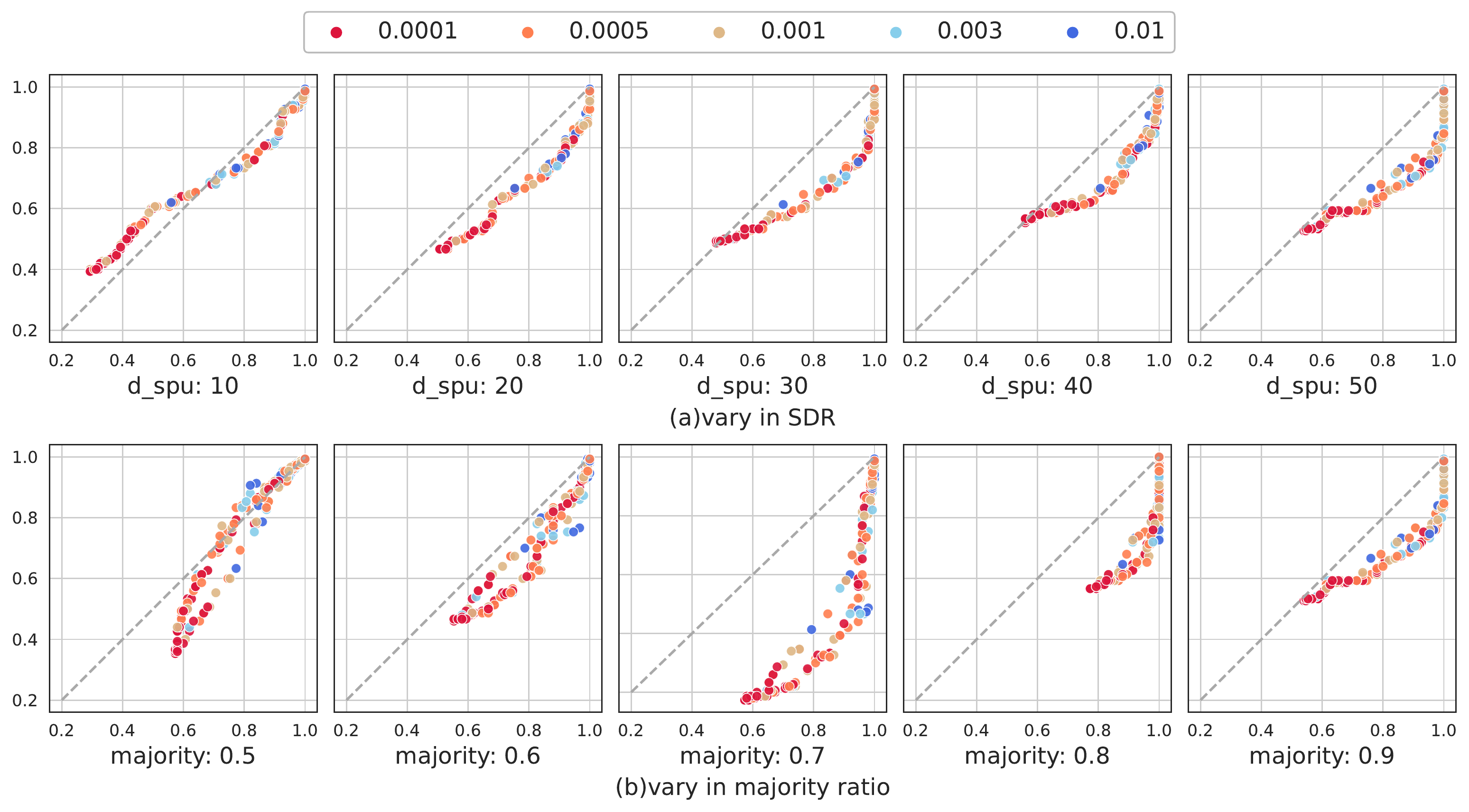}
    \caption{
    \small
    \textbf{Stronger spurious correlation creates increasing curvature in the moon shape.}
    \textbf{Series(a):} $d_{core}$ is fixed as 100, and $d_{spu}$ is varied from 10 to 50, simulating the increasing of SDR.
    \textbf{Series(b):} the ratio of the majority group is varied from 0.5 to 0.9.
    The figures plotted from left to right simulates the increasing of the spurious correlation.
    }
    \label{fig:figure_spurious}
\end{figure*}

\paragraph{Summary} 
In this section, we present a simple simulation study which successfully replicates the moon shape phenomenon in section 3. 
In addition, stronger spurious correlation again leads to more non-linear performance correlations. 
This indicates that the moon shape phenomenon we found might be a very general phenomenon. 
However, one distinct difference between the simulation study and our results on the real world datasets is on the training dynamics: On real-world datasets, we found that the moon shape persists within and across different training epochs (Figure~\ref{fig:figure-persist}), while in the simulation study, models converges to the top-right corner (Supp. Figure~\ref{fig:figure-persist}). 
Our finding on the real-world datasets motivates us to focus our analysis on comparing across \emph{different models} rather than comparing the subpopulation performance of a single model across training epochs. 
Meanwhile, our simulation study indicates that comparing the subpopulation performance of a single model across training epochs is an interesting direction of future work that is complementary to our scope of our main paper.

\newpage

\section{Theoretical analysis of the accuracy gap across subpopulations}
We rigorously study the effect of the spurious correlation on the accuracy gap between the majority and minority subpopulations in a binary classification setting. Our theoretical result shows that the accuracy gap becomes larger when there is a strong spurious correlation between subpopulations and labels and explain why multiple models form the moon shape curve through the ROC curve analysis. %
This is aligned with the theoretical models in the previous literature that spurious correlations can lead to poor accuracy in minority groups~\citep{Overparameterization-Exacerbates-Spurious-Correlations}. 

For $d_X \in \mathbb{N}$, we denote an input space by $\mathcal{X} \subseteq \mathbb{R}^{d_X}$ and denote an input and an output random variable by $X$ and $Y$, respectively. We denote a random variable for a subpopulation by $Z$, where $Z=1$ indicates the majority subpopulation\footnote{Whether $\mathbb{P}(Z=1) \geq 1/2$ or not is not critical in our theoretical analysis, but in terms of our notations, one sufficient condition for $\mathbb{P}(Z=1) \geq 1/2$ is $\pi_1 \geq 1/2$ and $\pi_0 \geq 1/2$.}, otherwise $Z=0$. We assume that the underlying data generating mechanism is $Z \rightarrow Y \rightarrow X$. This implies that $X$ and $Z$ are conditionally independent given $Y$, \textit{i.e.}, $X \perp Z \mid Y$. 
We suppose a conditional distribution of $X$ given $Y$ is given as follows.
\begin{align*}
    X \mid Y= 0 \sim F_0, \quad 
    X \mid Y= 1 \sim F_1,
\end{align*}
for some arbitrary distributions $F_0$ and $F_1$, and we assume 
\begin{align*}
    \mathbb{P}(Z=1 \mid Y=1) = \pi_1, \quad \mathbb{P}(Z=1 \mid Y=0) = \pi_0.
\end{align*}
It is noteworthy that $\pi_1=\pi_0$ is a necessary and sufficient condition for $Y \perp Z$ because $Y$ and $Z$ are Bernoulli random variables. In this respect, the level of spurious correlation can be expressed as $|\pi_1 - \pi_0|$. With these notations, the subpopulation accuracy gap for a model $g:\mathcal{X}\to \{0,1\}$ is expressed as follows:
\begin{align*}
\left| \mathbb{E}[ \mathds{1} (Y = g(X))\mid Z=1]-\mathbb{E}[ \mathds{1} (Y = g(X))\mid Z=0] \right|.
\end{align*}

In the following theorem, we explicitly show that the subpopulation accuracy gap is proportional to the level of spurious correlation $|\pi_1-\pi_0|$.

\begin{theorem}[The higher the level of spurious correlation, the larger the accuracy gap]
The subclass accuracy gap for a classifier $g:\mathcal{X} \to \{0, 1\}$ is expressed as follows.
\begin{align*}
    \textrm{Accuracy Gap} = \frac{\mathbb{P}(Y=1)\mathbb{P}(Y=0)}{\mathbb{P}( Z=1)\mathbb{P}(Z=0)}  |\pi_1 -\pi_0|  \left| \mathrm{TPR} - \mathrm{TNR} \right|,
\end{align*}
where $\mathrm{TPR}$ and $\mathrm{TNR}$ denote the true positive rate $\mathbb{E}( g(X)=1 \mid Y=1)$ under $F_1$ and the true negative rate $\mathbb{E}( g(X)=0 \mid Y=0)$ under $F_0$, respectively.
\label{thm:accuracy_gap}
\end{theorem}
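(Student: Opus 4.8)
The plan is to exploit the conditional independence $X \perp Z \mid Y$ furnished by the assumed causal structure $Z \to Y \to X$, which decouples the classifier's error behaviour (summarized by $\mathrm{TPR}$ and $\mathrm{TNR}$) from subpopulation membership. First I would condition each per-subpopulation accuracy on $Y$ and write, for $z \in \{0,1\}$,
\[
\mathbb{E}[\mathds{1}(Y=g(X)) \mid Z=z] = \sum_{y \in \{0,1\}} \mathbb{E}[\mathds{1}(y=g(X)) \mid Y=y,\, Z=z]\, \mathbb{P}(Y=y \mid Z=z).
\]
By $X \perp Z \mid Y$, the inner conditional expectation loses its dependence on $z$, and it equals $\mathrm{TPR}$ when $y=1$ and $\mathrm{TNR}$ when $y=0$. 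Hence each subpopulation accuracy is the convex combination $\mathrm{TPR}\cdot\mathbb{P}(Y=1\mid Z=z) + \mathrm{TNR}\cdot\mathbb{P}(Y=0\mid Z=z)$.

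Next I would subtract the two expressions. Setting $\Delta := \mathbb{P}(Y=1\mid Z=1) - \mathbb{P}(Y=1\mid Z=0)$ and using $\mathbb{P}(Y=0\mid Z=z)=1-\mathbb{P}(Y=1\mid Z=z)$, the $\mathrm{TNR}$ terms contribute $-\mathrm{TNR}\cdot\Delta$, so the difference collapses to $(\mathrm{TPR}-\mathrm{TNR})\,\Delta$. Taking absolute values already yields $\textrm{Accuracy Gap} = |\mathrm{TPR}-\mathrm{TNR}|\cdot|\Delta|$, isolating the entire dependence on the spurious correlation inside $|\Delta|$.

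The remaining and most delicate step is to rewrite $\Delta$ in terms of $|\pi_1-\pi_0|$ and the priors. I would apply Bayes' rule to obtain $\mathbb{P}(Y=1\mid Z=1)=\pi_1\,\mathbb{P}(Y=1)/\mathbb{P}(Z=1)$ and $\mathbb{P}(Y=1\mid Z=0)=(1-\pi_1)\,\mathbb{P}(Y=1)/\mathbb{P}(Z=0)$, place them over the common denominator $\mathbb{P}(Z=1)\mathbb{P}(Z=0)$, and simplify the numerator to $\mathbb{P}(Y=1)\,(\pi_1 - \mathbb{P}(Z=1))$. The key algebraic identity is that the law of total probability $\mathbb{P}(Z=1)=\pi_1\mathbb{P}(Y=1)+\pi_0\mathbb{P}(Y=0)$ gives $\pi_1 - \mathbb{P}(Z=1) = \mathbb{P}(Y=0)\,(\pi_1-\pi_0)$, which is precisely where the factor $|\pi_1-\pi_0|$ is produced. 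Substituting yields $\Delta = \frac{\mathbb{P}(Y=1)\mathbb{P}(Y=0)}{\mathbb{P}(Z=1)\mathbb{P}(Z=0)}(\pi_1-\pi_0)$, and combining with the previous step completes the proof. Since the argument is an exact bookkeeping computation rather than an inequality, there is no approximation or limiting behaviour to control; the only real obstacle is carrying the Bayes simplification cleanly, with the conditional independence step being the one genuinely structural input.
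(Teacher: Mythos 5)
Your proposal is correct and follows essentially the same route as the paper's proof: conditioning each subpopulation accuracy on $Y$, invoking $X \perp Z \mid Y$ to reduce the gap to $|\mathrm{TPR}-\mathrm{TNR}|\cdot|\mathbb{P}(Y=1\mid Z=1)-\mathbb{P}(Y=1\mid Z=0)|$, and then applying Bayes' rule together with the law of total probability for $\mathbb{P}(Z=1)$ to extract the factor $|\pi_1-\pi_0|\,\mathbb{P}(Y=1)\mathbb{P}(Y=0)/(\mathbb{P}(Z=1)\mathbb{P}(Z=0))$. Your intermediate identity $\pi_1-\mathbb{P}(Z=1)=\mathbb{P}(Y=0)(\pi_1-\pi_0)$ is just the mirror image of the paper's $\mathbb{P}(Z=0)-(1-\pi_1)=(\pi_1-\pi_0)\mathbb{P}(Y=0)$, so the two arguments are the same up to trivial algebraic rearrangement.
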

Theorem~\ref{thm:accuracy_gap} shows that the subpopulation accuracy gap is expressed as a function of $|\pi_1 -\pi_0|$ and $\left| \mathrm{TPR} - \mathrm{TNR} \right|$. A direct consequence is that the accuracy gap gets larger when the level of spurious correlation $|\pi_1 -\pi_0|$ increases. It is possible to keep $\mathbb{P}(Z=1)$ and $\mathbb{P}(Y=1)$ as constants while the spurious correlation $|\pi_1 -\pi_0|$ changes. In particular, it occurs when $\pi_1$ and $\pi_0$ are related as $\pi_1= (\mathbb{P}(Z=1)-\mathbb{P}(Y=0)\pi_0)/\mathbb{P}(Y=1)$, which captures the setting of  Figure~\ref{fig:figure-spurious}. Specifically, $\mathbb{P}(Y=1)$ and $\mathbb{P}(Z=1)$ are fixed to $0.5$ and $0.6$ respectively, yet the experimental result shows that the accuracy gap increases once $|\pi_1 -\pi_0|$ increases, which is supported by our theoretical result.

\begin{remark}[Models on the similar ROC curve]
Suppose that there is a trained binary classification model and its ROC curve is not a straight line, which is typically the case. We can think of different points on the ROC curve as different models whose predicted probability outputs are only different by constant shifts. Given that a point on the ROC curve is described as $(1-\mathrm{TNR}, \mathrm{TPR})$, $\mathrm{TPR}$ changes nonlinearly with respect to $\mathrm{TNR}$. Hence, the $|\mathrm{TPR}-\mathrm{TNR}|$ changes nonlinearly, and so does the accuracy gap by Theorem~\ref{thm:accuracy_gap}. This can provide one explanation for our experimental observations that different models form the moon shape curve.
\end{remark}
The setting considered in this remark is admittedly simplified to provide some intuition. In practice, different models (with different architectures and hyperparameters) may not correspond to different points on one ROC curve. However, if the different models do approximately trace out an ROC curve, then the intuition here can apply.

\subsection{Proof of Theorem~\ref{thm:accuracy_gap}} 
\begin{proof}[Proof of Theorem~\ref{thm:accuracy_gap}]
For any $z \in \{0,1\}$, we have 
\begin{align*}
    \mathbb{E}[\mathds{1}(Y=g(X)) \mid Z=z] &= \sum_{y=0} ^1 \mathbb{E}[ \mathds{1}(y=g(X)) \mid Z=z, Y=y] \mathbb{P}(Y=y \mid Z=z)\\
    &=\sum_{y=0} ^1 \mathbb{E}[ \mathds{1}(y=g(X)) \mid Y=y] \mathbb{P}(Y=y \mid Z=z)\\
    &=\mathrm{TPR} \times \mathbb{P}(Y=1 \mid Z=z) + \mathrm{TNR} \times \mathbb{P}(Y=0 \mid Z=z).
\end{align*}
Here, the second equality is due to $X \perp Z \mid Y$. Therefore, the accuracy gap between the two subpopulations is expressed as follows.
\begin{align*}
    \textrm{Accuracy Gap} &= \left| \mathbb{E}[\mathds{1}(Y=g(X)) \mid Z=1]- \mathbb{E}[\mathds{1}(Y=g(X)) \mid Z=0] \right|\\
    &= \Big| \mathrm{TPR} \times \left( \mathbb{P}(Y=1 \mid Z=1) -\mathbb{P}(Y=1 \mid Z=0) \right) \\
    &+ \mathrm{TNR} \times \left( \mathbb{P}(Y=0 \mid Z=1) - \mathbb{P}(Y=0 \mid Z=0) \right) \Big| \\
    &= \left| \mathbb{P}(Y=1 \mid Z=1) -\mathbb{P}(Y=1 \mid Z=0) \right| \times \left| \mathrm{TPR} - \mathrm{TNR}\right|.
\end{align*}
By the Bayes' theorem
\begin{align*}
    \mathbb{P}(Y=1 \mid Z=1) = \frac{\pi_1 \mathbb{P}(Y=1) }{\mathbb{P}(Z=1)}, \quad
    \mathbb{P}(Y=1 \mid Z=0) = \frac{ (1-\pi_1) \mathbb{P}(Y=1) }{\mathbb{P}(Z=0)},
\end{align*}
we have
\begin{align*}
    \mathbb{P}(Y=1 \mid Z=1) -\mathbb{P}(Y=1 \mid Z=0) &= \frac{\pi_1 \mathbb{P}(Y=1) }{\mathbb{P}(Z=1)} - \frac{ (1-\pi_1) \mathbb{P}(Y=1) }{\mathbb{P}(Z=0)} \\
    &= \frac{ \pi_1 \mathbb{P}(Y=1) \mathbb{P}(Z=0) - (1-\pi_1) \mathbb{P}(Y=1) \mathbb{P}(Z=1) }{\mathbb{P}(Z=1)\mathbb{P}(Z=0)} \\
    &= \frac{ \left\{ \mathbb{P}(Z=0) - (1-\pi_1) \right\} \mathbb{P}(Y=1) }{\mathbb{P}(Z=1)\mathbb{P}(Z=0)}.
\end{align*}
Since $\mathbb{P}(Z=0)=1- (\pi_1 \mathbb{P}(Y=1)+\pi_0 \mathbb{P}(Y=0)) = 1 - \pi_1 + (\pi_1 - \pi_0)\mathbb{P}(Y=0)$, we have
\begin{align*}
    \textrm{Accuracy Gap} &=  \frac{\mathbb{P}(Y=1)\mathbb{P}(Y=0)}{\mathbb{P}(Z=1)\mathbb{P}(Z=0)} |\pi_1 -\pi_0| \times \left| \mathrm{TPR} - \mathrm{TNR} \right|.
\end{align*}
It concludes a proof.
\end{proof}

\end{document}